\newcommand{\deepmindstyle}{}  

\ifx\deepmindstyle\undefined
\documentclass{article} 
\PassOptionsToPackage{authoryear, sort&compress, round}{natbib}
\usepackage{neurips_2019}
\usepackage[utf8]{inputenc} 
\usepackage[T1]{fontenc}    
\usepackage{hyperref}       
\usepackage{booktabs}       
\usepackage{amsfonts}       
\usepackage{nicefrac}       
\usepackage{microtype}      
\usepackage{url}
\usepackage[dvipsnames,table]{xcolor}
\else
\documentclass[10pt, a4paper]{deepmind}
\usepackage[authoryear, sort&compress, round]{natbib}
\fi

\usepackage{kantlipsum, lipsum}
%

\usepackage{amsmath, latexsym}
\usepackage{amsfonts}
\usepackage{mathtools}
\usepackage{ntheorem}
\usepackage{dsfont}
\usepackage[dvipsnames,table]{xcolor}
\usepackage[colorinlistoftodos]{todonotes}
\usepackage{booktabs}
\usepackage{xfrac}
\usepackage{bbm}

\usepackage{algorithm}
\usepackage{algorithmicx}

\usepackage[most]{tcolorbox}
\usepackage{xparse}
\usepackage{lipsum}
\usepackage{changepage}
\usepackage{enumitem}

\newcommand{\squishlist}{
   \begin{list}{$\bullet$}
    { \setlength{\itemsep}{0pt}      \setlength{\parsep}{3pt}
      \setlength{\topsep}{3pt}       \setlength{\partopsep}{0pt}
      \setlength{\leftmargin}{1.5em} \setlength{\labelwidth}{1em}
      \setlength{\labelsep}{0.5em} } }

\newcommand{\squishlisttwo}{
   \begin{list}{$\bullet$}
    { \setlength{\itemsep}{0pt}    \setlength{\parsep}{0pt}
      \setlength{\topsep}{0pt}     \setlength{\partopsep}{0pt}
      \setlength{\leftmargin}{2em} \setlength{\labelwidth}{1.5em}
      \setlength{\labelsep}{0.5em} } }

\newcommand{\squishend}{
    \end{list}  }


\newtheorem{thm}{Theorem}[section]

\newenvironment{proof}{{\bf Proof.}}{}


\newcommand{\grad}{\nabla}
\newcommand{\proj}{\mbox{\texttt{Proj}}}








\newcommand{\sign}{\mbox{sign}}

\newcommand{\E}{\mathbb{E}}





\DeclareMathAlphabet{\mathpzc}{OT1}{pzc}{m}{n}


\newcommand{\R}{\mathbb{R}}

\DeclareMathOperator*{\argmax}{argmax}
\DeclareMathOperator*{\argmin}{argmin}
\newcommand{\mnist}{\textsc{Mnist}\xspace}
\newcommand{\cifar}{\textsc{Cifar-10}\xspace}
\newcommand{\svhn}{\textsc{Svhn}\xspace}
\newcommand{\imagenet}{\textsc{ImageNet}\xspace}
\newcommand{\wideresnet}{WideResNet\xspace}
\newcommand{\resnet}{ResNet\xspace}
\newcommand{\multitargeted}{\texttt{MultiTargeted}\xspace}
\newcommand{\linf}{\ensuremath{\ell_\infty}\xspace}
\newcommand{\ltwo}{\ensuremath{\ell_2}\xspace}
\newcommand{\lone}{\ensuremath{\ell_1}\xspace}

\DeclareMathOperator*{\maximize}{max}

\definecolor{Highlight}{gray}{0.9}
\newcommand{\Tstrut}{\rule{0pt}{2.6ex}}
\newcommand{\Bstrut}{\rule[-0.9ex]{0pt}{0pt}}
\newcommand{\TBstrut}{\Tstrut\Bstrut}


\usepackage{xspace}
\usepackage{algpseudocode}
\usepackage{wrapfig}
\usepackage{subcaption}
\usepackage{multirow}
\captionsetup[subfigure]{justification=centering}

\graphicspath{{figures/}}


\setcounter{topnumber}{2}
\setcounter{bottomnumber}{2}
\setcounter{totalnumber}{4}
\setcounter{dbltopnumber}{2}

\title{An Alternative Surrogate Loss for PGD-based Adversarial Testing}

\newcommand{\abtracttext}{
Adversarial testing methods based on Projected Gradient Descent (PGD) are widely used for searching norm-bounded perturbations that cause the inputs of neural networks to be misclassified.
This paper takes a deeper look at these methods and explains the effect of different hyperparameters (i.e., optimizer, step size and surrogate loss).
We introduce the concept of \multitargeted testing, which makes clever use of alternative surrogate losses, and explain when and how \multitargeted is guaranteed to find optimal perturbations.
Finally, we demonstrate that \multitargeted outperforms more sophisticated methods and often requires less iterative steps than other variants of PGD found in the literature.
Notably, \multitargeted ranks first on MadryLab's white-box \mnist and \cifar leaderboards, reducing the accuracy of their \mnist model to 88.36\% (with \linf perturbations of $\epsilon = 0.3$) and the accuracy of their \cifar model to 44.03\% (at $\epsilon = 8/255$).
\multitargeted also ranks first on the TRADES leaderboard reducing the accuracy of their \cifar model to 53.07\% (with \linf perturbations of $\epsilon = 0.031$).
}

\ifx\deepmindstyle\undefined
\author{%
  Sven Gowal \\
  DeepMind \\
  \texttt{sgowal@google.com}
  \And
  Jonathan Uesato \\
  \And
  Chongli Qin \\
  \And
  Po-Sen Huang \\
  \AND
  Timothy Mann \\
  \And
  Pushmeet Kohli \\
}

\else

\correspondingauthor{sgowal@google.com}

\reportnumber{001} 


\author[1]{Sven Gowal}
\author[1]{Jonathan Uesato}
\author[1]{Chongli Qin}
\author[1]{Po-Sen Huang}
\author[1]{Timothy Mann}
\author[1]{Pushmeet Kohli}

\affil[1]{DeepMind}

\begin{abstract}
\abtracttext
\end{abstract}
\fi

\begin{document}
\maketitle

\ifx\deepmindstyle\undefined
\begin{abstract}
\abtracttext
\end{abstract}
\else
\balance
\fi

\section{Introduction}

Despite the successes of deep learning \citep{goodfellow_deep_2016}, it is well-known that neural networks are not intrinsically robust.
In particular, it has been shown that the addition of small but carefully chosen deviations to the input, called adversarial perturbations, can cause the neural network to make incorrect predictions with high confidence \citep{carlini_adversarial_2017,carlini_towards_2017,goodfellow_explaining_2014,kurakin_adversarial_2016,szegedy_intriguing_2013}. The sensitivity of neural network models necessitates the development of methods that can either systematically find such failure cases, or declare that no such failure case exists with high-confidence. Starting with \citet{szegedy_intriguing_2013}, there has been a lot of work on understanding and generating adversarial perturbations \citep{carlini_towards_2017,athalye_synthesizing_2017}, and on building models that are robust to such perturbations \citep{goodfellow_explaining_2014,papernot_distillation_2015,madry_towards_2017,kannan_adversarial_2018,xie_feature_2018}.
Unfortunately, many strategies proposed in the literature target failure cases found through specific adversaries, and as such they are easily broken under different adversaries \citep{uesato_adversarial_2018,athalye_obfuscated_2018}.
Again, this phenomenon highlights the importance of understanding the limitations of different adversarial techniques when it comes to finding when models fail.

Instead of resorting to methods that directly focus on improving robustness to specific attacks, \citet{madry_towards_2017} formulate a saddle point problem whose goal is to find model parameters $\theta$ that minimize the adversarial risk:
\begin{equation}
\E_{(x,y) \sim \mathcal{D}} \left[ \maximize_{\xi \in \mathcal{S}(x)} L(f_\theta(\xi), y) \right]
\label{eq:adversarial_risk}
\end{equation}
where $\mathcal{D}$ is a data distribution over pairs of examples $x$ and corresponding labels $y$, $f_\theta$ is a model parametrized by $\theta$, $L$ is a suitable loss function (such as the $0-1$ loss in the context of classification tasks), and $\mathcal{S}(x)$ defines the set of allowed perturbations (i.e., the adversarial input set or threat model).
The aim of rigorous testing is to accurately estimate this adversarial risk.
As such, finding the worst-case input (or optimal adversarial example) $\xi^\star \in \mathcal{S}(x)$ is a key consideration for both training and testing models.
Finding sub-optimal inputs $\hat{\xi}$ with $L(f_\theta(\hat{\xi}), y) \leq L(f_\theta(\xi^\star), y)$ results in computing a lower bound on the true adversarial risk (or in the context of a classification task, an upper bound on the true robust accuracy), which may give a false sense of security.
Hence, it is of practical importance to find the solution of the inner maximization problem in Equation~\eqref{eq:adversarial_risk} as efficiently and as accurately as possible.

\subsection{Methods for Adversarial Testing}\label{sec:related_work}
Several methods (also known as ``attacks'') have been proposed to find adversarial examples (and effectively solve the inner maximization problem in Equation~\eqref{eq:adversarial_risk}).
\citet{goodfellow_explaining_2014} proposed one of the earliest method specific to \linf-bounded perturbations.
It is known as the Fast Gradient Sign Method (FGSM):
it replaces the impractical $0-1$ loss $L$ with the cross-entropy loss $\hat{L}$ and computes an adversarial example $\hat{\xi}$ as
\begin{equation}
x + \epsilon \sign \left( \grad_x \hat{L}(f_\theta(x), y) \right) .
\end{equation}
A more powerful adversary is the multi-step variant FGSM$^K$, which essentially performs $K$ projected gradient steps on the surrogate loss $\hat{L}$~\citep{kurakin_adversarial_2016} to find $\hat{\xi} = \xi^{(K)}$ with
\begin{equation}
\xi^{(k+1)} \gets \proj_{\mathcal{S}(x)} \left( \xi^{(k)} + \alpha \sign \left( \grad_{\xi^{(k)}} \hat{L}(f_\theta(\xi^{(k)}), y) \right) \right)
\end{equation}
where $\xi^{(0)}$ is typically chosen at random within $S(x)$, and \proj~is a projection operator, for example:
\begin{equation}
\proj_{\mathcal{S}(x)}(\xi) = \argmin_{\xi' \in \mathcal{S}(x)} \| \xi' - \xi \|_2 .
\end{equation}
Another popular method stems from \cite{carlini_towards_2017} who extend the formulation of \cite{szegedy_intriguing_2013}.
They investigate how different optimization methods and different losses $\hat{L}$ affect the quality of the adversarial examples.
Crucially, and similarly to \cite{uesato_adversarial_2018}, they also propose to use the Adam~\citep{kingma_adam:_2014} optimizer rather than regular gradient descent.
Finally, since gradient-based method are sensitive to initial conditions, it is not uncommon to repeat the optimization procedure many times with different initializations $\xi^{(0)}$ (i.e., multiple restarts).
All above methods are approximate and one generally hopes to find an adversarial example $\hat{\xi}$ such that $L(f_\theta(\hat{\xi}, y) = L(f_\theta(\xi^\star, y)$.
A ``stronger'' attack finds such adversarial examples more often than a ``weaker'' attack for $(x,y) \sim \mathcal{D}$.

More recently, \cite{zheng_distributionally_2018} developed the Distributionally Adversarial Attack (DAA) which performs optimization on the space of potential data distributions and ranked first place on MadryLab's white-box leaderboards (both on \mnist\footnote{\url{https://github.com/MadryLab/mnist_challenge}} and \cifar\footnote{\url{https://github.com/MadryLab/
cifar10_challenge}}).
This attack was overtaken by the Interval Attack~\citep{wang_enhancing_2019} and the Fast Adaptive Boundary Attack (FAB,~\citealp{croce2019minimally}) on the \mnist and \cifar leaderboards, respectively.
\cite{brendel2019accurate} extended their decision-based boundary attack~\citep{brendel2017decision} with an emphasis on efficiency (which is not the focus on this work).
Note that there exists many more attacks~\citep{papernot_limitations_2016,moosavi_deepfool_2016,chen2018ead} that are either weaker or optimize norms other than \linf.

\subsection{Revisiting PGD}
The increasing complexity and sophistication of methods for adversarial testing makes them difficult to adapt to different models, architectures and application areas.
This motivated us to revisit Projected Gradient Descent (PGD) as a means to solve the maximization problem in Equation~\eqref{eq:adversarial_risk}.
We focus on robustness against \linf-bounded attacks (i.e., $\mathcal{S}(x) = \{ \xi ~:~ \| \xi - x \|_\infty \leq \epsilon \}$ where $\epsilon$ is an a-priori defined constant).
We expect the analysis performed here to translate well to $\lone$ and $\ltwo$-bounded attacks and reserve the evaluation of these threat models for future work. 

\noindent The contributions of this paper are as follows:
\squishlist
\item We create a guide to PGD, explaining the different variants and highlighting their use in practice. We provide a set of hyperparameters (i.e., optimizer, step size and surrogate loss) with which PGD does well across a wide range of models. We demonstrate that \emph{regular PGD} (as opposed to the \multitargeted version described below) can be tuned such that it would rank first on MadryLab's white-box \mnist leaderboard, reducing the accuracy of their model to 88.21\% (with \linf perturbations of $\epsilon = 0.3$).
\item We introduce the concept of a \multitargeted attack and demonstrate its effectiveness by comparing it with other regular PGD-based attacks under the same computational budget. \multitargeted is a PGD-based untargeted attack with an alternative surrogate loss.
\item We compare \multitargeted with other state-of-the-art methods.
We show that \multitargeted ranks first on MadryLab's white-box \mnist leaderboard, reducing the accuracy of their model to 88.36\% (with \linf perturbations of $\epsilon = 0.3$).
We also found \multitargeted to be particularly effective on models trained on \cifar and \imagenet. It ranks first on MadryLab's white-box \cifar leaderboard, reducing the accuracy of their model to 44.03\% (with \linf perturbations of $\epsilon = 8/255$).
\multitargeted also ranks first on the TRADES leaderboard reducing the accuracy of their \cifar model to 53.07\% (with \linf perturbations of $\epsilon = 0.031$).
\squishend

\section{PGD-based rigorous testing}\label{sec:preliminaries}

As explained in Section~\ref{sec:related_work}, there exists many attack variants.
In this paper, we focus on variants of PGD.
A PGD-based attack typically executes the following procedure:
\begin{algorithm}[h]
\footnotesize
\caption{White-box PGD-based attack}
\begin{algorithmic}[1]
\Require A nominal input $x$ and label $y$, a model $f_\theta$ and an adversarial input set $\mathcal{S}(x)$. $\alpha^{(k)}$ is the step-size at iteration $k$ and $\texttt{Opt}$ is an optimizer (e.g., \texttt{Adam}).
\Ensure Possible attack $\hat{\xi} \in \mathcal{S}(x)$
\State $\hat{\xi} \gets x$
\For{$r \in \{1, \ldots, N_\textrm{r}\}$} \Comment{Repeat the attack process $N_\textrm{r}$ times} \label{line:start}
  \State Initialize surrogate loss $\hat{L}^{(r)}$ \Comment{The surrogate loss may be different across restarts (see Sec.~\ref{sec:method})} \label{line:end}
  \State Initialize optimizer \texttt{Opt} \Comment{The optimizer ingests gradients and computes an update}
  \State $\xi^{(0)} \gets \texttt{SampleFrom}(x, \mathcal{S})$ \Comment{Samples a random input that respects the threat model}
  \For{$k \in \{1, \ldots, K\}$} \Comment{K corresponds to the number of optimization steps}
    \State $\xi^{(k)} \gets \proj_{\mathcal{S}(x)} \left( \xi^{(k-1)} + \alpha^{(k)} \texttt{Opt} \left( \grad_{\xi^{(k-1)}} \hat{L}^{(r)}(f_\theta(\xi^{(k-1)}), y) \right) \right)$
    \If{$L(f_\theta(\xi^{(k)}), y) > L(f_\theta(\hat{\xi}, y)$} \Comment{$L$ is the loss of interest (e.g., $0-1$ loss)}
      \State $\hat{\xi} \gets \xi^{(k)}$ \Comment{If $L$ is the $0-1$ loss, the procedure can terminate early}
    \EndIf
  \EndFor
\EndFor
\end{algorithmic}
\label{alg:pgd}
\end{algorithm}

\noindent Algorithm~\ref{alg:pgd} highlights a few design choices.
Beyond the number of restarts $N_\textrm{r}$ and the number of PGD steps $K$\footnote{We are actually performing projected gradient ascent steps, but will keep using the PGD acronym.}, one must decide which optimizer \texttt{Opt}, which sub-differentiable surrogate loss $\hat{L}$, and which step size schedule (or learning rate) $\alpha^{(k)}$ to use.
Apart from sometimes varying the number of steps $K$, recent works often use a single set of fixed parameters to provide baseline results (irrespective of the network architecture).
While none of these choices are necessarily trivial, there exists reasonable options (some of which are explained in details by~\citealp{carlini2019evaluating}).
Here is a \emph{non-exhaustive} list.

\paragraph{Optimizer.}
Since the work from \cite{kurakin_adversarial_2016} on \linf-norm bounded perturbations, the \sign~function has been most commonly used. More recently, \cite{carlini_towards_2017} and subsequently \cite{dong2018boosting} proposed to use the Adam~\citep{kingma_adam:_2014} and Momentum~\citep{polyak1964some} optimizers, respectively.
Most works limit themselves to zeroth-order and first-order methods for efficiency, but in principle higher order methods are applicable (especially when the dimensionality of $x$ is small).
In general, we found that the Adam optimizer produced the most consistent results (similar conclusions were reached by \citealp{uesato_adversarial_2018}) and was less sensitive to the step size. There were, however, cases where the \sign~function was more efficient (e.g., when models used activations that saturate such as the sigmoid function).
For clarity in the algorithm description, we categorize the \sign~function as being part of the optimizer.
As such, it also worth mentioning that any other normalization of the gradient is possible (e.g., \ltwo-normalization when dividing by $\| \grad_{x^{(k-1)}} \hat{L}^{(r)}(f_\theta(x^{(k-1)}), y) \|_2$).

\paragraph{Surrogate loss.} The most common (and closely related) surrogate losses used in the literature are
\begin{align}
\hat{L}(z, y) &= - z_y + \log \left( \sum_{i = 1}^C e^{z_i} \right) \quad & \textrm{cross-entropy loss}\\
\hat{L}(z, y) &= - z_y + \max_{i \neq y} z_i & \textrm{margin loss} \label{eq:margin_loss}
\end{align}
where $z = f_\theta(x) \in \R^C$ are logits (and $C$ is the number of classes).
For most models, both losses tend to perform equally well (which is expected given their close relationship).
In some cases, we found the margin loss to be slightly better (as reported by \citealp{carlini_towards_2017}).
\cite{carlini_towards_2017} also propose a list of seven losses that operate in both probability and logit spaces (loss 1 is the cross-entropy loss, while loss 6 is closely related to the margin loss).

\paragraph{Step size.} The step size (or learning rate) can have a significant influence on the success rate of the attack.
It is often easier to tune the step size when using normalized gradients, and most works limit themselves to well established values.
We found that using a schedule for the step size can help alleviate some the weakness induced by gradient obfuscation~\citep{uesato_adversarial_2018,carlini_adversarial_2017} while maintaining a small number of PGD steps.
The right schedule depends on the choice of optimizer, model and dataset studied.

\paragraph{Trade-off between number of restarts and number of PGD steps.}
When compute is not limited, it is always preferable to increase the number of restarts $N_\textrm{r}$ and the number of PGD steps $K$.
However, with limited compute, the relationship between $N_\textrm{r}$ and reduce $K$ is unclear.
As a rule of thumb, we suggest tuning the optimizer, surrogate loss and step size with one restart and then increase the number of restarts as much as possible for the available compute.

For classification tasks, a popular choice is FGSM$^K$ which can be implemented by setting the optimizer to the \sign~function, by using cross-entropy as surrogate loss, and by fixing $\alpha^{(k)}$ to a fixed constant (typically around $\epsilon / 10$).
While FGSM$^K$ often gives reasonable results, we found that these parameters were \emph{always} sub-optimal across all the datasets (i.e., \mnist, \cifar, \svhn, \imagenet) and models (e.g., \wideresnet, \resnet and other standard convolutional networks) we tested over the past year.
Our usual starting point for tuning PGD-based attacks is to set the optimizer to Adam, set the step size to $0.1$ (with $10\times$ decay at $K/2$ and $3K/4$ steps) and use the margin loss.
Again, it is really important to understand that, while these values tend to work well in practice for commonly seen models and datasets, they need to be tuned for each case specially.

\section{Method}\label{sec:method}

\multitargeted is an instantiation of Algorithm~\ref{alg:pgd} where we set the surrogate loss to
\begin{align}
\hat{L}^{(r)}(z, y) &= - z_y + z_{\mathcal{T}_{r \bmod |\mathcal{T}|}} \textrm{ with } \mathcal{T} = \{1, \ldots, C\} \setminus \{y\} \label{eq:multitargeted_loss}
\end{align}
In other words, at each restart, we pick a new target class $t \in \mathcal{T}$ where the set $\mathcal{T}$ contains all classes other than $y$.
Rewriting lines~\ref{line:start}-\ref{line:end} of Algorithm~\ref{alg:pgd} more programmatically, we obtain Algorithm~\ref{alg:multitargeted}.
\begin{algorithm}[ht]
\footnotesize
\caption{\multitargeted attack}
\begin{algorithmic}[1]
\setcounter{ALG@line}{1}\algrenewcommand\alglinenumber[1]{\footnotesize #1a:}%
  \State $N_\textrm{i} \gets \left\lfloor N_\textrm{r} / |\mathcal{T}| \right\rfloor$ \Comment{Keep the number of iterations consistent}
\setcounter{ALG@line}{1}\algrenewcommand\alglinenumber[1]{\footnotesize #1b:}%
\For{$i \in \{1, \ldots, N_\textrm{i} \}$}
\setcounter{ALG@line}{1}\algrenewcommand\alglinenumber[1]{\footnotesize #1c:}%
  \For{$j \in \{1, \ldots, |\mathcal{T}|\}$}
\algrenewcommand\alglinenumber[1]{\footnotesize #1:}%
    \State Use surrogate loss $\hat{L}^{(i |\mathcal{T}| + j + 1)}(z, y) = -z_y + z_t$ with $t = \mathcal{T}_j$
\algrenewcommand\alglinenumber[1]{}%
    \State ...
\setcounter{ALG@line}{10}\algrenewcommand\alglinenumber[1]{\footnotesize #1:}%
  \EndFor
\EndFor
\end{algorithmic}
\label{alg:multitargeted}
\end{algorithm}

\multitargeted uses $|\mathcal{T}|$ different surrogate losses (one for each class except the true class).
To keep the number of restarts comparable with other PGD-based attacks that use a single fixed surrogate loss, Algorithm~\ref{alg:multitargeted} uses less restarts per target class $t$ (i.e., $\lfloor N_\textrm{r} / \mathcal{T} \rfloor$).
To limit runtime complexity, it is possible to restrict the number of target classes by targeting only the top-$T$ classes:
\begin{align}
\mathcal{T} &= \left \{ t \quad\arrowvert\quad T > |\{ i \textrm{~such that~} i \neq y \textrm{ and } z_i > z_t \}|  \right \}
\end{align}
In Section~\ref{sec:results}, we will see that \multitargeted outperforms regular PGD-based attacks (any of the variants explained in Section~\ref{sec:preliminaries}).
As a results, it is often possible to lower the total number of PGD steps $K$, the number of inner restarts $N_\textrm{i}$ or the size of the target set $\mathcal{T}$ to have comparable (or even lower) runtime complexity while still outperforming regular PGD-based attacks.

\subsection{Analysis}\label{appendix:mt_analysis}
\label{sec:example}

\begin{figure*}[t]
\centering
\begin{subfigure}{0.45\textwidth}
\centering
\includegraphics[width=\linewidth]{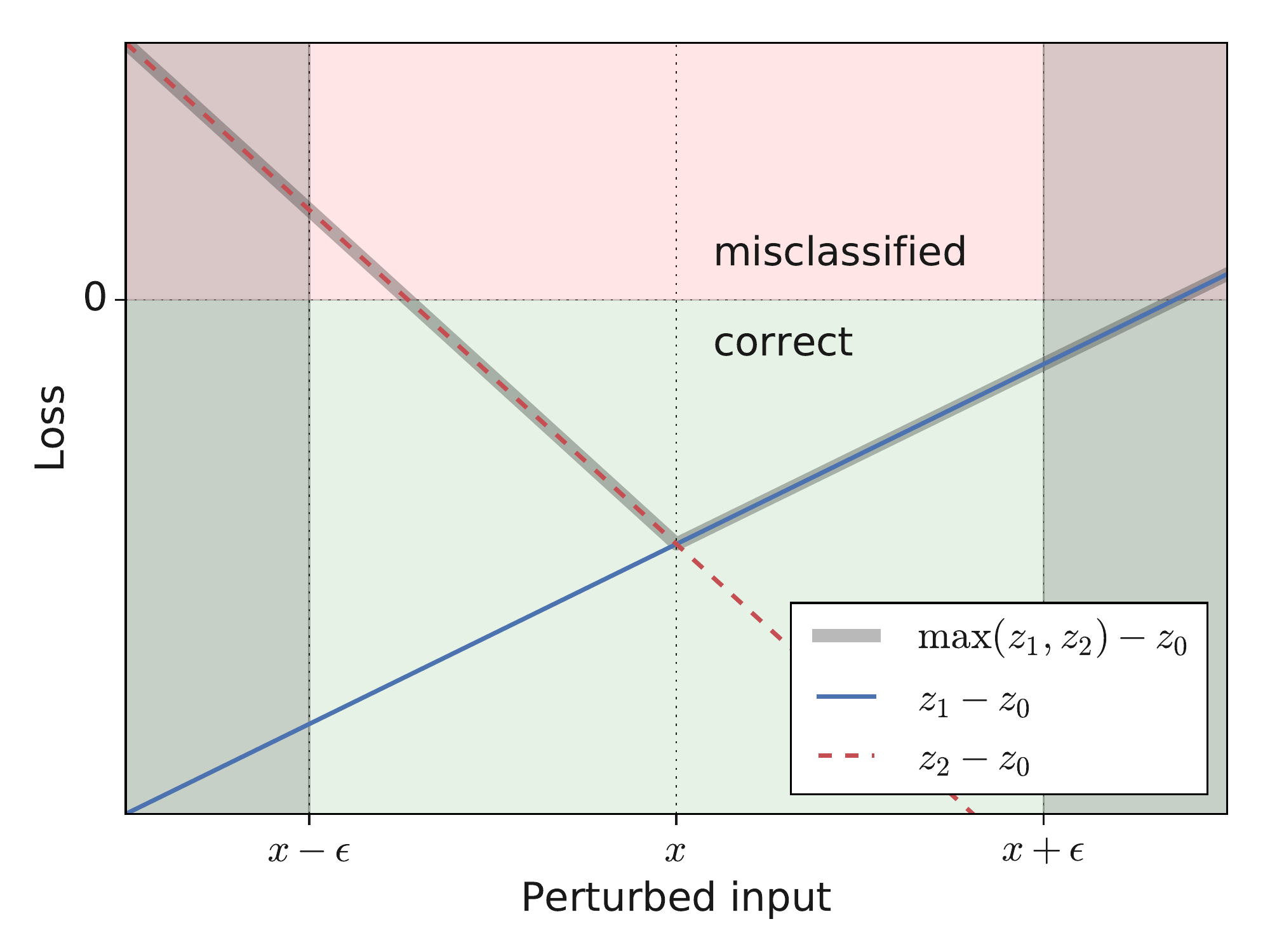}
\caption{\label{fig:motivating_example}}
\end{subfigure}
\hspace{1cm}
\begin{subfigure}{0.45\textwidth}
\centering
\includegraphics[width=\linewidth]{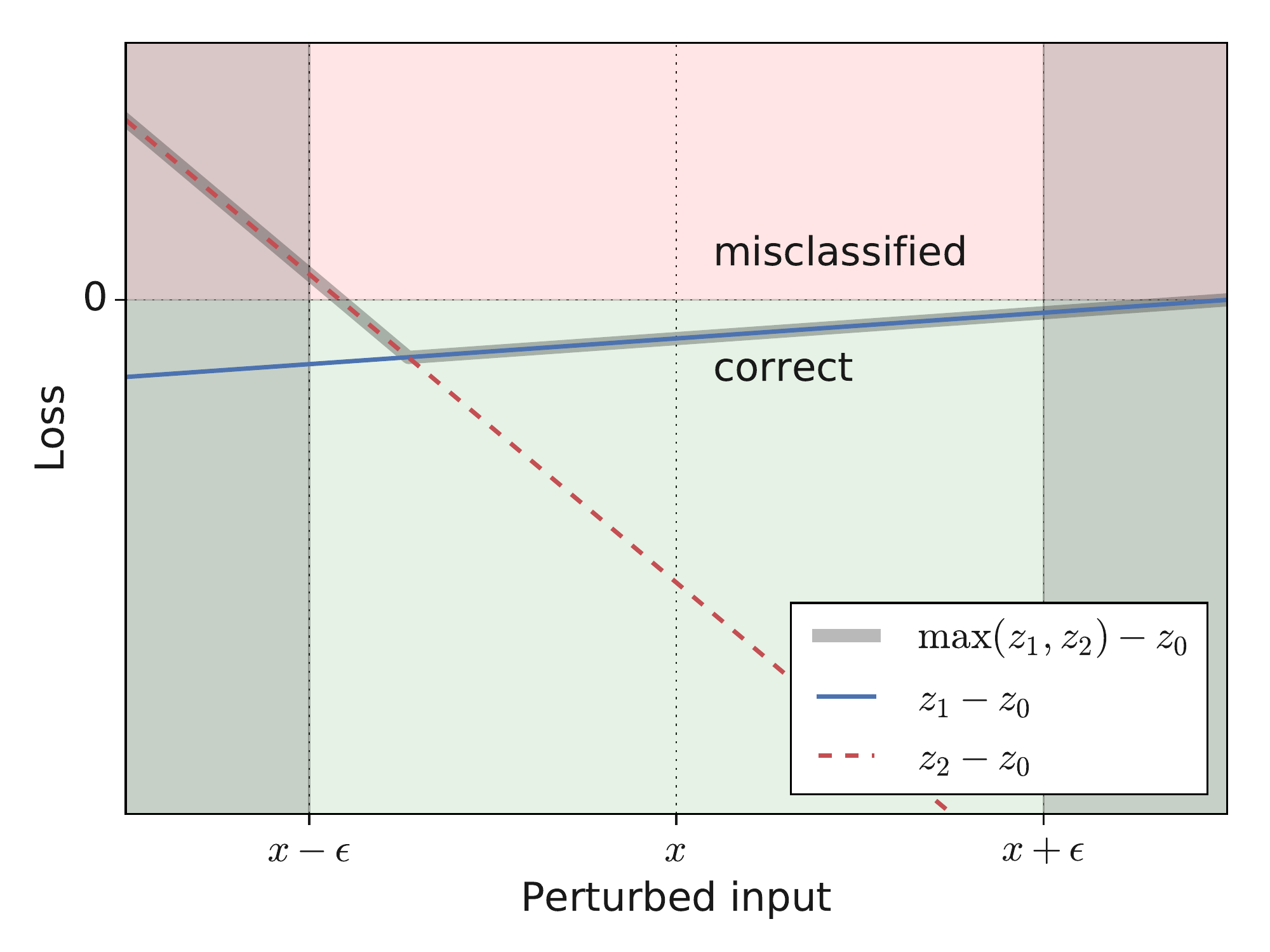}
\caption{\label{fig:motivating_example2}}
\end{subfigure}
\caption{Panel \subref{fig:motivating_example} shows an example motivating why \multitargeted can outperform other untargeted attacks. In this particular example a regular PGD-based attack with 2 restarts achieves 75\% success rate (as opposed to 100\% for \multitargeted). Panel \subref{fig:motivating_example2} shows a more extreme example. Here, a regular PGD-based attack with 2 restarts achieves 25\% success rate only. For both panels, the areas shaded in gray are outside the adversarial input set.}
\label{fig:examples}
\end{figure*}

\subsubsection{Toy example}
To motivate the effectiveness of the \multitargeted attack, we use the example depicted in Figure~\ref{fig:motivating_example}.
This example considers a one dimensional 3-way classification task with a linear classifier, hence, there are three output logits $z_0$, $z_1$ and $z_2$.
Without perturbation, a single dimensional input $x$ with true class $0$ is correctly classified (i.e., $z_0 > z_1$ and $z_0 > z_2$).
The y-axis represents the values taken by the margin loss $\max(z_1, z_2) - z_0$ (one of the surrogate losses used by regular PGD-based attacks), as well as the logit differences $z_1 - z_0$ and $z_2 - z_0$ (which are the surrogate losses used by \multitargeted) as a function of the perturbed input $x + \delta$.

The adversarial input set is $\mathcal{S}(x) = [x - \epsilon, x + \epsilon]$.
The perturbed input $x + \delta$ is correctly classified if and only if the logit differences are both negative.
Note how $z_1 - z_0$ cannot be positive (i.e., $z_1 > z_0$) for any $|\delta| < \epsilon$, to the contrary of $z_2 - z_0$ (i.e., $z_2 > z_0$).
As such, within the adversarial input set, $x + \delta$ can only be confused for class 2.
The success of a regular PGD-based attack that uses the margin loss depends on the initialization $\xi^{(0)}$.
When $\xi^{(0)}$ is initialized between $x$ and $x + \epsilon$ (this happens with 50\% chance if $\xi^{(0)}$ is uniformly sampled in $\mathcal{S}(x)$, the attack will converge to $\hat{\xi} = x + \epsilon$, which is correctly classified.
When $\xi^{(0)}$ is sampled between $x - \epsilon$ and $\epsilon$, the attack will converge to $\hat{\xi} = x - \epsilon$, which is misclassified.
Hence, with two restarts, when $\xi^{(0)}$ is chosen uniformly at random, the attack will succeed with probability $1 - (1 - 0.5)^2 = 75\%$.
However \multitargeted, which explicitly uses the logit differences as surrogate losses independently, will have a $100\%$ success rate (when using two restarts).
Figure~\ref{fig:motivating_example2} shows a more extreme example where the logit difference $z_2 - z_0$ overlaps minimally with the margin loss.
In this example, with two restarts, a regular PGD-based attack achieves 25\% success rate only.

\subsubsection{Linear models}
We can repeat the experiment illustrated by these examples.
We set the nominal input $x$ to zero and use a maximum perturbation radius $\epsilon$ of one.
Assuming a linear classifier $f(x) = Wx + b$, we uniformly at random sample weights $W$ and biases $b$ within the $[-1, 1]$ interval.
We assign the true class $y$ to be index of the highest logit at $x = 0$ (i.e., $\argmax_i b_i$).
We consider an input $x$ to be attackable if there exists a perturbation such that a logit that does not belong to the true class is the highest (i.e., $\exists \delta \in [-\epsilon, \epsilon], i \neq y$ such that $f(x + \delta)_i > f(x + \delta)_y$).
Given the linearity of the classifier and, as established above, if an input is attackable, \multitargeted will succeed in finding a misclassified input $100\%$ of the time (if the number of restarts is at least equal to the number of classes minus one).
Repeating the experiment a million times, a regular PGD-based attack succeed only $96.16\%$ of the time when the number of classes is three and the number of restarts is two.
Slicing this result, we also notice that when the number of confusing classes (classes different from the true class for which there exists a perturbed input that is classified as such, i.e., $|\{i | \exists \delta \textrm{ with } f(x + \delta)_i > f(x + \delta)_y\}|$) is two, it succeeds $100\%$ of the time (as expected). When the number of confusing classes is one, it succeeds $92.39\%$ of the time.
Figure~\ref{fig:numerical_example} shows how these results are affects by additional classes.
In particular, we observe that regular PGD becomes less efficient as the number of confusing classes decreases (which is the hallmark of good classifiers).
This analysis demonstrates that Figure~\ref{fig:examples} generalizes to any linear model, multiple input dimensions and any convex adversarial input set (in the sense that $\mathcal{S}(x)$ is a convex set), and leads to the following theorem.

\begin{thm}\label{thm:global_linear}
Given a globally linear model $f_\theta$ with $C$ output logits, for any input $x$, \multitargeted is stronger than regular PGD attacks that use the margin loss (or cross-entropy loss) when the number of restart is greater or equal to $C - 1$ (i.e., $N_\textrm{r} \geq C - 1$) and the adversarial input set $\mathcal{S}(x)$ is convex.
In fact, \multitargeted always finds an optimal attack $\xi^\star$ in this case (i.e., $\nexists \xi \in \mathcal{S}(x)$ s.t. $\hat{L}(f_\theta(\xi, y) > \hat{L}(f_\theta(\xi^\star, y)$).
\end{thm}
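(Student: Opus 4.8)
The plan is to exploit the single structural fact that drives the whole statement: for a globally linear model $f_\theta(\xi) = W\xi + b$, every \multitargeted surrogate loss is \emph{affine} in $\xi$, whereas the margin loss driving regular PGD is a \emph{maximum} of affine functions, hence convex. Maximizing an affine objective over the convex set $\mathcal{S}(x)$ is a concave maximization with no spurious local optima, so PGD reaches its global maximizer; maximizing the convex margin loss can instead trap gradient ascent at a local optimum lying on the wrong active piece, which is precisely the failure mode drawn in Figure~\ref{fig:examples}. Turning this contrast into the theorem is a matter of (i) reducing each surrogate to an affine form, (ii) arguing global optimality per restart, and (iii) recombining the per-target optima into the global margin optimum.

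First I would write $\mathcal{T} = \{1,\ldots,C\}\setminus\{y\}$, so $|\mathcal{T}| = C-1$, and for each target $t \in \mathcal{T}$ rewrite the \multitargeted surrogate of Equation~\eqref{eq:multitargeted_loss} as
\[
g_t(\xi) \defeq -f_\theta(\xi)_y + f_\theta(\xi)_t = (W_t - W_y)\,\xi + (b_t - b_y),
\]
where $W_t$ denotes the $t$-th row of $W$; this is affine, therefore concave, in $\xi$. Since $\mathcal{S}(x)$ is convex, the restart targeting class $t$ solves $\max_{\xi \in \mathcal{S}(x)} g_t(\xi)$, a concave maximization over a convex feasible set, for which every local maximizer is global. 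Hence that restart returns some $\xi_t^\star \in \argmax_{\xi \in \mathcal{S}(x)} g_t(\xi)$.

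Next I would use that the margin loss of Equation~\eqref{eq:margin_loss} is exactly $\hat{L}(f_\theta(\xi), y) = \max_{t \in \mathcal{T}} g_t(\xi)$, and interchange the two maxima:
\[
\max_{\xi \in \mathcal{S}(x)} \hat{L}(f_\theta(\xi), y) = \max_{\xi \in \mathcal{S}(x)} \max_{t \in \mathcal{T}} g_t(\xi) = \max_{t \in \mathcal{T}} \max_{\xi \in \mathcal{S}(x)} g_t(\xi) = \max_{t \in \mathcal{T}} g_t(\xi_t^\star).
\]
Because $N_\textrm{r} \geq C-1 = |\mathcal{T}|$, \multitargeted runs at least one restart per target and thus produces every $\xi_t^\star$; the selection rule of Algorithm~\ref{alg:pgd} then keeps the best iterate across restarts, i.e. the $\xi_{t^*}^\star$ attaining $\max_{t} g_t(\xi_t^\star)$. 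By the display this equals $\max_{\xi\in\mathcal{S}(x)}\hat{L}(f_\theta(\xi),y)$, so the returned $\hat{\xi}$ is an optimal attack $\xi^\star$ (with no $\xi \in \mathcal{S}(x)$ giving strictly larger margin loss), which proves the ``in fact'' clause. The ``stronger than'' claim follows immediately: an attack that always attains the global optimum succeeds at least as often as any regular PGD run, whether the latter uses the margin or the cross-entropy loss, since those can only lower-bound the same optimal value.

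The step I expect to be the real obstacle is the global-optimality claim for each restart: concavity rules out bad local optima, but one still has to argue that the actual projected gradient iteration (under the step-size schedule and optimizer of Algorithm~\ref{alg:pgd}) \emph{converges} to the maximizer rather than merely lacking spurious ones, and to handle the generic case where that maximizer sits on the boundary of $\mathcal{S}(x)$ so the projection is active. This needs a standard convergence argument for projected (sub)gradient ascent of a concave function over a convex set; the remaining ingredients — the affine reduction, the max-interchange, and the counting argument tying $N_\textrm{r} \geq C-1$ to full coverage of $\mathcal{T}$ — are routine.
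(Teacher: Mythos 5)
Your proposal is correct and takes essentially the same route as the paper's proof: the paper likewise observes that the margin/cross-entropy losses are convex in $\xi$ (so regular PGD's success hinges on initialization), while each \multitargeted logit-difference is affine and hence maximized globally over the convex $\mathcal{S}(x)$, with the overall optimum "contained within one of the hyper-planes defined by one of the logit differences" --- which is exactly your max-interchange step made informal. Your version is simply a more explicit write-up, and the convergence caveat you flag (that projected gradient ascent on an affine objective actually reaches the maximizer under the chosen optimizer and step sizes) is equally left implicit in the paper.
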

\begin{proof}
Since the margin and cross-entropy surrogate losses $\hat{L}(W\xi + b, y)$ used by regular PGD-based attacks are convex with respect to $\xi$, finding their global maximum using gradient ascent depends solely on the initial guess $\xi^{(0)}$.
If $\xi^{(0)}$ is randomly sampled, a regular PGD-based attack has no guarantee that it will find the global maximum as demonstrated in Figure~\ref{fig:examples} (increasing the number of restarts increases the probability that the global maximum is found).
On the other hand, \multitargeted, which inspects all logit differences, is guaranteed to find the global maximum (since the global maximum is contained within one the hyper-planes defined by one of the logit differences).
In the same manner, we can also prove that \multitargeted always finds the optimal attack for any model $f_\theta$ where logit differences are strictly monotonic.
\end{proof}

\subsubsection{Deeper models}
For deeper models, and since the last layer is typically linear, we can reason about the efficiency of \multitargeted by analyzing the shape of the propagated adversarial input set $\mathcal{Z}(x) = \{ f_\theta(\xi) | \xi \in \mathcal{S}(x) \}$.
If $\mathcal{Z}(x)$ is convex, we fallback onto the analysis made in the previous paragraph, for which we know that \multitargeted is more effective than regular PGD.
When $\mathcal{Z}(x)$ is non-convex, the efficiency of \multitargeted cannot be proven.
In fact, we can artificially build examples for which regular PGD is better, and vice-versa (see Appendix~\ref{sec:nonconvex_examples}).
However, we intuitively expect \multitargeted to do well when the behavior $f_\theta$ is close to linear within $\mathcal{S}(x)$, since convexity is preserved under affine transformations.
Previous work~\citep{qin2019adversarial,moosavi-dezfooli_robustness_2018} observed that robust classifiers (at least those trained via adversarial training, \citealp{madry_towards_2017}) tend to behave linearly in the neighborhood of training examples.
Our experimental results confirm these findings: for complex datasets beyond \mnist, for which robust models need to enforce smoothness, \multitargeted is a stronger attack than regular PGD (Appendix~\ref{sec:linearity} gives more details).
Finally, we point out that this analysis generalizes to any specification beyond robustness to \linf-bounded perturbations.
We can summarize our findings in the following theorem.

\begin{figure*}[t]
\centering
  \begin{minipage}[c]{0.45\textwidth}
    \includegraphics[width=\linewidth]{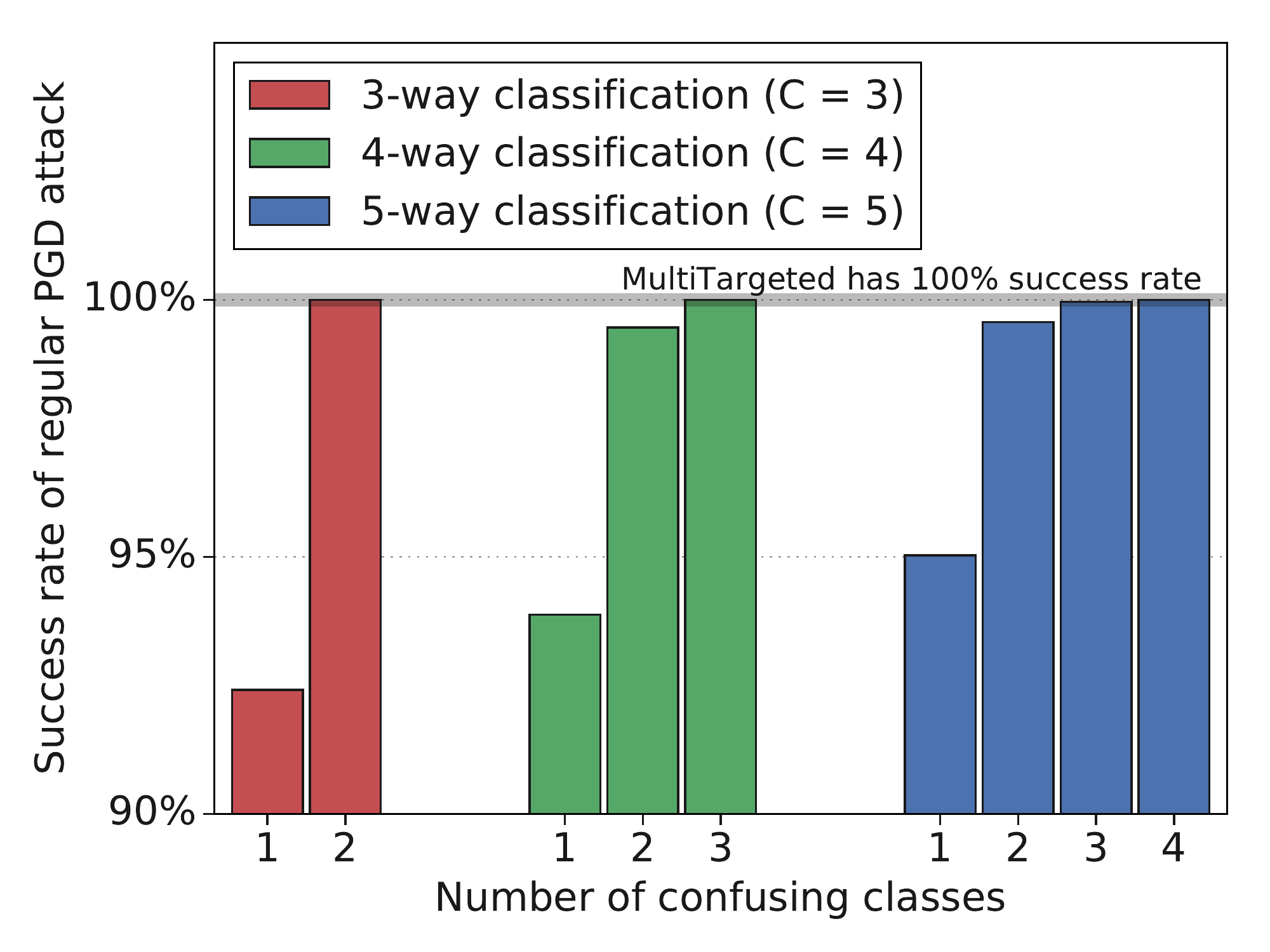}
  \end{minipage}\hfill
  \begin{minipage}[c]{0.45\textwidth}
    \caption{Success rate of a regular PGD-based attack with $C-1$ restarts against a random linear classifier depending on the number of confusing classes. For all possible numbers of confusing classes, \multitargeted has a 100\% success rate. \label{fig:numerical_example}}
  \end{minipage}
\end{figure*}

\begin{thm}\label{thm:local_linear}
For any input $x$ and a convex adversarial input set $\mathcal{S}(x)$, if $f_\theta$ has $C$ outputs and $f_\theta$ is locally linear around $x$ (i.e., $\exists W, b$ s.t. $\forall \xi in \mathcal{S}(x), f_\theta(x) = Wx + b$), \multitargeted always finds an optimal attack $\xi^\star$ within $N_\textrm{r} = C - 1$ restarts (i.e., $\nexists \xi \in \mathcal{S}(x)$ s.t. $\hat{L}(f_\theta(\xi, y) > \hat{L}(f_\theta(\xi^\star, y)$).
\end{thm}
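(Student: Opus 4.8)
The plan is to reduce Theorem~\ref{thm:local_linear} to the globally linear case already settled in Theorem~\ref{thm:global_linear}. The essential observation is that local linearity makes the restriction of $f_\theta$ to $\mathcal{S}(x)$ coincide exactly with a global linear model $\xi \mapsto W\xi + b$, and that every quantity the attack ever inspects depends on $f_\theta$ only through its values (and gradients) on $\mathcal{S}(x)$. So the whole analysis of the linear classifier transfers verbatim, provided the attack never queries $f_\theta$ outside the linear region.

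First I would verify precisely that claim about the execution of Algorithm~\ref{alg:pgd}. The random initialization $\xi^{(0)}$ lies in $\mathcal{S}(x)$ by construction, each iterate $\xi^{(k)}$ is projected back onto $\mathcal{S}(x)$ via $\proj_{\mathcal{S}(x)}$, and consequently each gradient $\grad_{\xi^{(k-1)}} \hat{L}^{(r)}(f_\theta(\xi^{(k-1)}), y)$ and each loss evaluation $L(f_\theta(\xi^{(k)}), y)$ is taken at a point of $\mathcal{S}(x)$. On that set $f_\theta(\xi) = W\xi + b$ with constant Jacobian $W$, so the trajectory, the surrogate gradients and the bookkeeping on the true loss $L$ are identical to those produced against the global linear model $W\xi + b$. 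Applying Theorem~\ref{thm:global_linear} to that model then yields the claim: with $N_\textrm{r} = C-1 = |\mathcal{T}|$ restarts, \multitargeted sweeps through every target class exactly once (Algorithm~\ref{alg:multitargeted}) and is guaranteed to return an optimal $\xi^\star$.

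For completeness I would also record the self-contained version of the argument that Theorem~\ref{thm:global_linear} packages. Writing $g_t(\xi) = (W_t - W_y)\xi + (b_t - b_y)$ for the logit difference of target $t \in \mathcal{T}$, each \multitargeted surrogate loss evaluates to $\hat{L}^{(r)}(f_\theta(\xi), y) = g_t(\xi)$, which is affine in $\xi$; projected gradient ascent over the convex set $\mathcal{S}(x)$ therefore converges to $\max_{\xi \in \mathcal{S}(x)} g_t(\xi)$ independently of the initialization, since the ascent direction is the constant $(W_t - W_y)\T$. The margin loss is the pointwise maximum $\max_{t \in \mathcal{T}} g_t(\xi)$, and for a maximum of affine functions one has the interchange $\max_{\xi \in \mathcal{S}(x)} \max_{t \in \mathcal{T}} g_t(\xi) = \max_{t \in \mathcal{T}} \max_{\xi \in \mathcal{S}(x)} g_t(\xi)$; hence the optimal margin value is attained at the best of the $C-1$ per-target optima that \multitargeted computes, and the running comparison against the true loss $L$ retains that optimum.

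The main obstacle is the same caveat that underlies Theorem~\ref{thm:global_linear}: one must argue that projected first-order ascent on an affine objective actually reaches its boundary maximizer over $\mathcal{S}(x)$ for the particular optimizer \texttt{Opt} and step-size schedule $\alpha^{(k)}$ in use, rather than merely that the objective is well behaved. I would treat PGD idealistically as an exact maximizer of an affine function over a convex set, exactly as Theorem~\ref{thm:global_linear} implicitly does. The only way local linearity could fail to transfer is if a pre-projection step $\xi^{(k-1)} + \alpha^{(k)} \texttt{Opt}(\cdots)$ left the linear region before gradients were re-evaluated; but this is harmless, because gradients and losses are only ever read at the projected iterates, which stay inside $\mathcal{S}(x)$, so the behaviour remains indistinguishable from the global linear model and the reduction is complete.
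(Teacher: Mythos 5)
Your proposal is correct and follows essentially the same route as the paper: both reduce Theorem~\ref{thm:local_linear} to Theorem~\ref{thm:global_linear} by observing that on the convex set $\mathcal{S}(x)$ the locally linear $f_\theta$ is indistinguishable from a global linear model (the paper phrases this as convexity of the propagated set $\mathcal{Z}(x)$ being preserved under a linear map). Your additional check that every gradient and loss evaluation occurs at a projected iterate inside $\mathcal{S}(x)$ makes the reduction more explicit than the paper's one-line argument, but it is the same idea.
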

\begin{proof}
The proof follows directly from Theorem~\ref{thm:global_linear} and the fact that, under a linear transformation, the propagated adversarial input set $\mathcal{Z}(x) = \{ f_\theta(\xi) | \xi \in \mathcal{S}(x) \}$ remains convex.
\end{proof}

\section{Experimental Analysis}\label{sec:results}

In this section, we run \multitargeted on widely available models and compare it against regular well-tuned untargeted PGD-based attacks.
To avoid confusion, we denote by PGD$^{K_1 \times N_\textrm{r}}$ the regular PGD-based attack with $N_\textrm{r}$ restarts and $K_1$ PGD steps that reaches the highest success rate among the different variants explained in Section~\ref{sec:preliminaries} (that excludes \multitargeted attacks, but includes FGSM$^K$).
We also denote by MT$^{K_2 \times N_\textrm{i} \times T}$ the \multitargeted attack that targets the top-$T$ classes, uses $N_\textrm{i}$ restarts per target and does $K_2$ PGD steps.
When we omit $T$, we target all classes other than the true class.
To keep the number of compute equivalent, we compare attacks such that $K_1 \times N_\textrm{r} = K_2 \times N_\textrm{i} \times T$ (when $T$ is omitted, for \mnist and \cifar, it is equal to 9).
Unlike of regular PGD$^{K_1 \times N_\textrm{r}}$ (which we tune using grid-search) -- and because we did not find \multitargeted to be as sensitive to hyperparameters as other PGD-based attacks -- we fix MT$^{K_2 \times N_\textrm{i} \times T}$ to use Adam as its optimizer, set the initial step size at 0.1 and reduce it to 0.01 and 0.001 after $K_2 / 2$ and $3 K_2 / 4$ steps, respectively (this schedule is also included in the grid-search performed on regular PGD).

\paragraph{\cifar.}

We focus our analysis on four models.
All models are \wideresnet variants.
The first is adversarially trained by~\citeauthor{madry_towards_2017} (trained against $\epsilon = 8/255$) and is available from MadryLab's white-box \cifar leaderboard.\footnote{\url{https://github.com/MadryLab/cifar10_challenge}; the model used here is the \texttt{adv\_trained} model.}
The second model is trained using TRADES~\citep{zhang2019theoretically} and is available on the TRADES leaderboard.\footnote{\url{https://github.com/yaodongyu/TRADES}} It is empirically more robust than the first network.
The third model is trained using Unsupervised Adversarial Training (UAT, \citealp{uesato2019labels}) and is empirically more robust than the other two models but is three times deeper (with 106 layers).
The last model is adversarially trained by us using the learning rate schedule proposed by \cite{zhang2019theoretically}.
This model is as robust as the \citeauthor{zhang2019theoretically}'s model under regular PGD-based attacks.

\begin{figure*}[t]
\centering
\begin{subfigure}{0.49\textwidth}
\centering
\includegraphics[width=\linewidth]{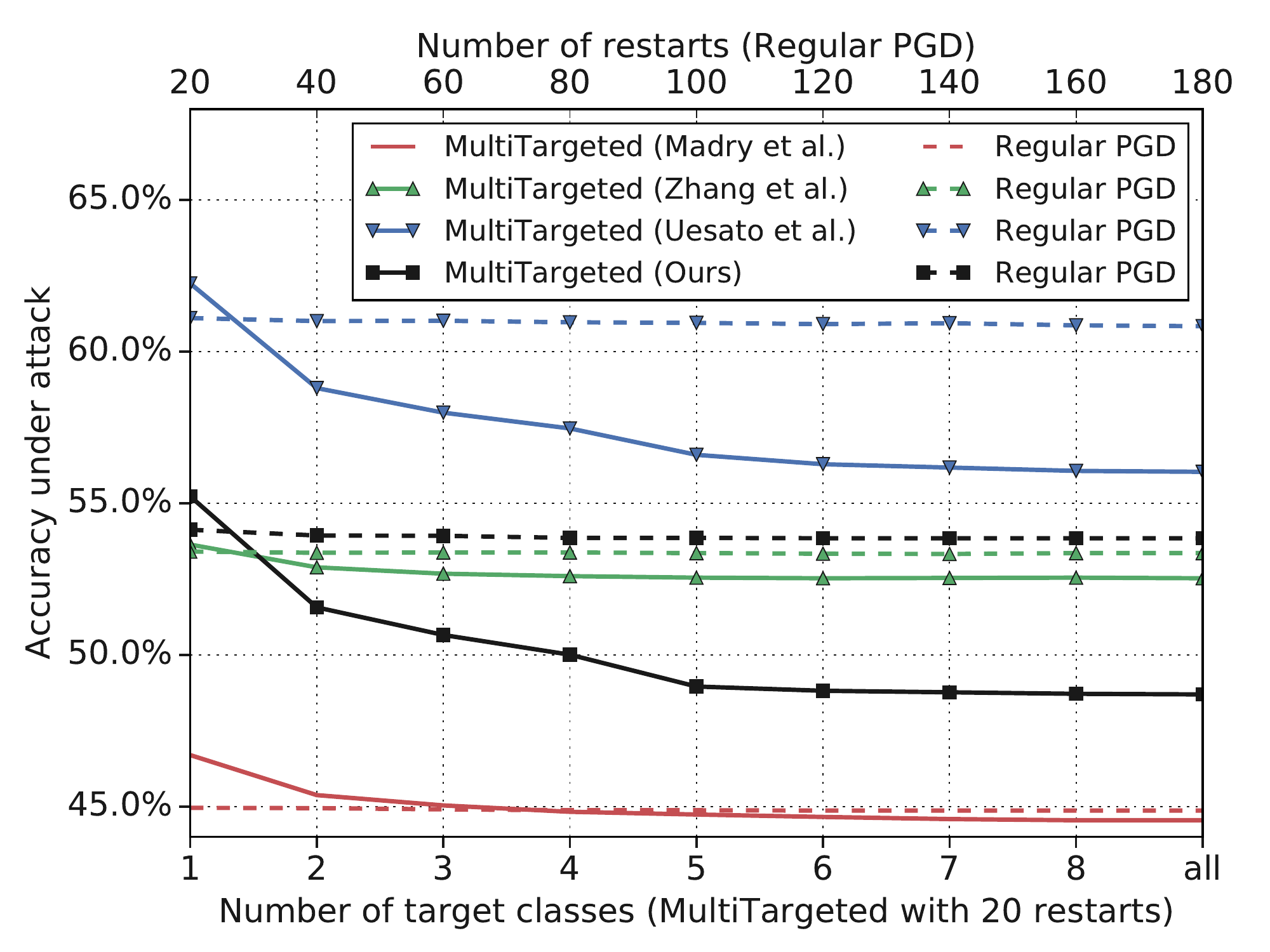}
\caption{\label{fig:cifar_combined2}}
\end{subfigure}
\begin{subfigure}{0.49\textwidth}
\centering
\includegraphics[width=\linewidth]{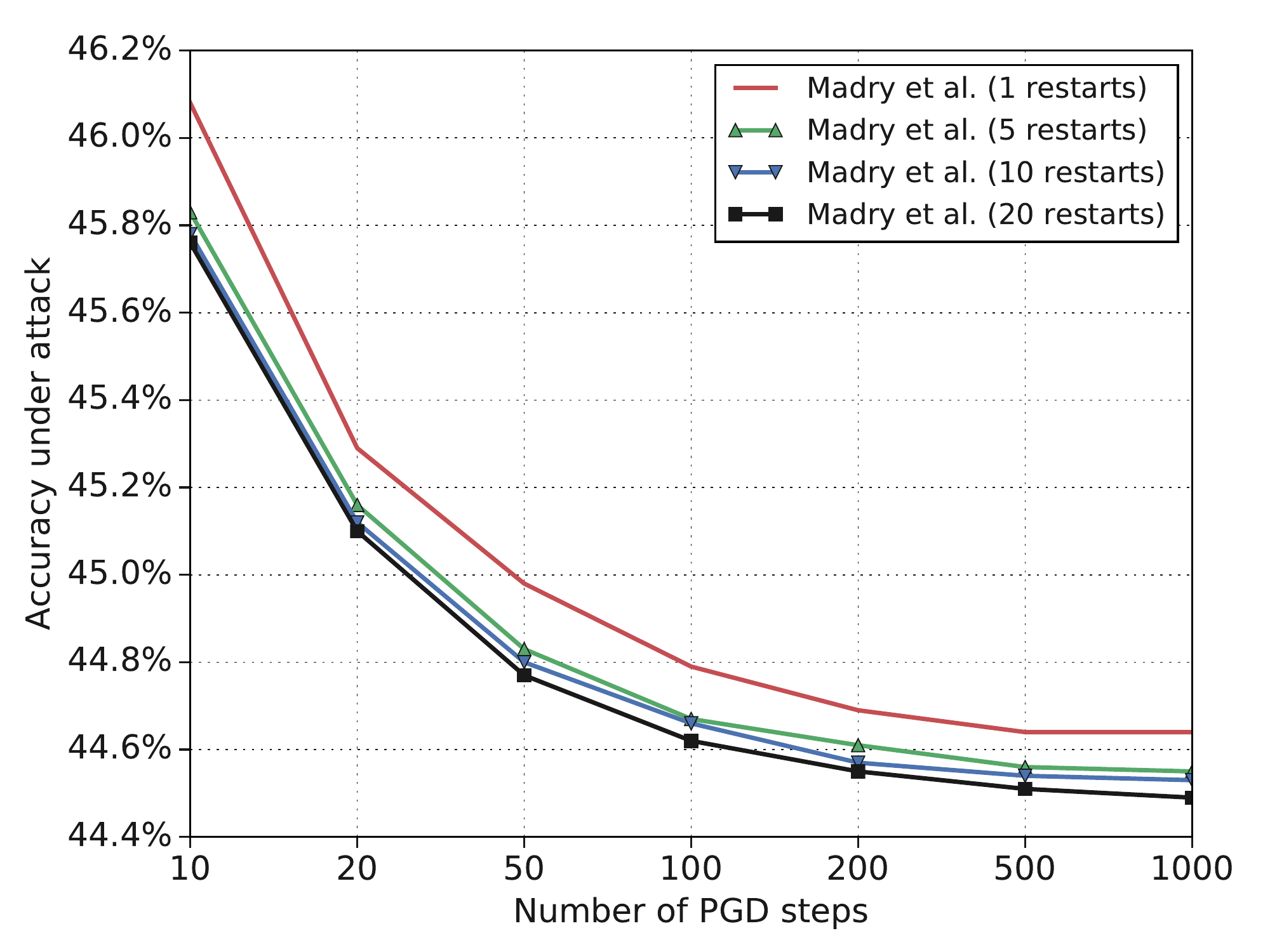}
\caption{\label{fig:cifar_restarts}}
\end{subfigure}
\caption{Panel \subref{fig:cifar_combined2} shows the accuracy under attacks of size $\epsilon = 8/255$ for four different \cifar models. The solid lines are the accuracies obtained by \multitargeted for different values of $T$, while the dashed lines of the corresponding color are accuracies obtained by regular PGD (for the same computational budget).
Panel \subref{fig:cifar_restarts} shows the accuracy under a \multitargeted attack of size $\epsilon = 8/255$ for \citeauthor{madry_towards_2017}'s model as a function of the number of PGD steps $K$ and the number of restarts $N_\textrm{i}$.}
\label{fig:cifar}
\end{figure*}

Figure~\ref{fig:cifar_combined2} shows the accuracy reached by \multitargeted as a function of the number of target classes $T$ at $\epsilon = 8/255$. It compares MT$^{200\times 20 \times T}$ with PGD$^{200 \times (20 \cdot T)}$.
For all models, \multitargeted is the strongest attack.
In fact, even when targeting only the top-2 classes, \multitargeted significantly outperforms regular PGD on all models (except \citeauthor{madry_towards_2017}'s model).
We also observe that the performance of regular PGD stagnates and that adding more restarts is ineffective, which appears to confirm that for some examples in the test set there is a low probability of finding an initial guess $\xi^{(0)}$ that leads to misclassified perturbation (as examplified in Figure~\ref{fig:motivating_example2}).
For completeness, Figure~\ref{fig:cifar_restarts} shows how the accuracy under an MT$^{K \times N_\textrm{i}}$ attack evolves as a function of the number of PGD steps $K$ and the number of restarts $N_\textrm{i}$. We note that, while this figure shows specifically what happens on \citeauthor{madry_towards_2017}'s model, the behavior is identical on the other three \cifar models. As expected, the strength of the attack saturates with more restarts and steps. We also observe that \multitargeted with a single restart is more effective than a regular PGD-based attack with hundreds of restarts.

Finally, Table~\ref{table:leaderboard}, at the end of this section, summarizes the performance of each attack on models that have an online leaderboard (i.e., \citeauthor{madry_towards_2017}'s and \citeauthor{zhang2019theoretically}'s models).
For both models, \multitargeted ranks first.

\paragraph{\imagenet.}

We focus our analysis on three models.
All models are adversarially trained against random targeted attacks at $\epsilon = 16/255$ (as is usual on \imagenet).
However, they all exhibit non-trivial empirical robustness to untargeted attacks; and we evaluate them in that context (as \multitargeted is an untargeted attack).
The first model is a standard \resnet-152: it is available from \citeauthor{xie_feature_2018}'s GitHub page.\footnote{\url{https://github.com/facebookresearch/ImageNet-Adversarial-Training}}
The second model is a variant of \resnet-152 that uses additional ``denoise'' blocks: it is also trained by \cite{xie_feature_2018}.
The third model is trained by ourselves: we put emphasis on robustness under attack rather than accuracy on clean examples.
As a result (on the first 1000 images of the test set) the last model has a clean accuracy of 51.4\% compared to 66.8\% for the ``denoise'' model and 64.1\% for the regular model by \citeauthor{xie_feature_2018}.

Figure~\ref{fig:imagenet_combined} shows the accuracy reached by \multitargeted as a function of the number of target classes $T$ at $\epsilon = 16/255$. It compare MT$^{200\times 20 \times T}$ with PGD$^{200 \times (20 \cdot T)}$.
For all models \multitargeted is a better attack (yielding lower accuracy) starting with $T = 2$.
Another interesting observation is the fact that under the \multitargeted attack, the standard and ``denoise'' variants trained by \citeauthor{xie_feature_2018} are on par (the regular PGD-based attack would indicate otherwise).\footnote{This analysis is performed using an untargeted attack and these conclusions are not necessarily at odds with the results shown by~\cite{xie_feature_2018}, which looks at random targeted attacks.}
This confirms the observation made in Section~\ref{sec:example}.
Indeed, \cite{xie_feature_2018} demonstrate that under random targeted attacks the ``denoise'' model is stronger.
As such, there are less confusing classes, which can explain why \multitargeted is performing well in comparison to regular PGD on the ``denoise'' model.

\begin{figure*}[t]
\centering
  \begin{minipage}[c]{0.49\textwidth}
    \includegraphics[width=\linewidth]{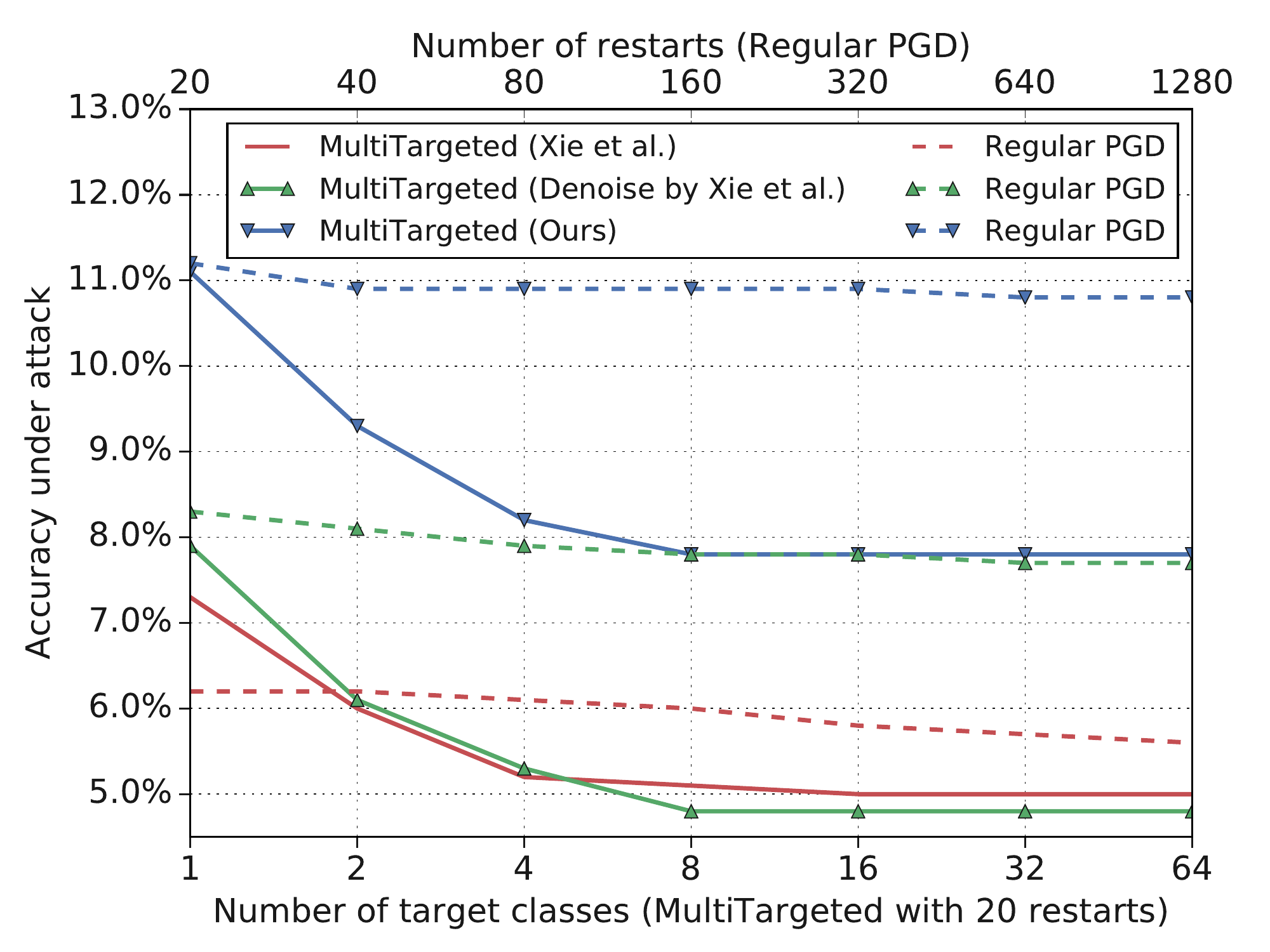}
  \end{minipage}\hfill
  \begin{minipage}[c]{0.45\textwidth}
    \caption{Accuracy under attacks of size $\epsilon = 16 / 255$ for three different \imagenet models. The solid lines are the accuracies obtained by \multitargeted for different values of $T$, while the dashed lines of the corresponding color are accuracies obtained by regular PGD (for the same computational budget). \label{fig:imagenet_combined}}
  \end{minipage}
\end{figure*}

\paragraph{Leaderboards on \mnist and \cifar.}

In this section, we introduce a combined PGD+MT$^{K \times N_\textrm{i}}$ attack that alternates between the logit differences shown in Equation~\eqref{eq:multitargeted_loss} and the margin loss in Equation~\eqref{eq:margin_loss}.
Thus, for \mnist and \cifar, where the number of classes is ten, the total number of surrogate losses is ten and thusthe total number of gradient evaluation is $K \times N_\textrm{i} \times 10$ (which results in using about 11\% more compute that \multitargeted).
Algorithm~\ref{alg:pgd_mt} in the supplementary material shows an implementation of this attack.

For \mnist, we analyze the model adversarially trained by~\citeauthor{madry_towards_2017} (trained against $\epsilon = 0.3$). It is available from MadryLab's white-box \mnist leaderboard.\footnote{\url{https://github.com/MadryLab/mnist_challenge}; the model used here is the \texttt{secret} model.}
On the leaderboard, it states that this model achieves 89.62\% accuracy under PGD$^{100\times50}$.
With proper tuning, our implementation of PGD$^{100\times50}$ reaches 89.03\% accuracy, which is closer to the state-of-art results obtained by~\cite{wang_enhancing_2019} with 88.42\%.
For \cifar, we re-analyze two \wideresnet models.
The first model from \citeauthor{madry_towards_2017} is available on MadryLab's leaderboard,\footnote{\url{https://github.com/MadryLab/cifar10_challenge}; the model used here is the \texttt{secret} model.} where it achieves 45.21\% accuracy under PGD$^{20\times10}$ (at $\epsilon = 8 / 255$).
Our tuned PGD$^{20\times10}$ variant reaches a similar accuracy of 45.18\% and the state-of-the-art is obtained by~\cite{croce2019minimally} with 44.51\%.
The second model of \citeauthor{zhang2019theoretically} is from the TRADES leaderboard, where it states that FGSM$^{1000}$ reaches 56.53\% in accuracy (at $\epsilon = 0.031)$.
Our tuned PGD$^{50\times20}$ variant reaches 54.05\% accuracy.
The state-of-the-art obtained by~\citeauthor{croce2019minimally} is 53.44\%.

Table~\ref{table:leaderboard} summarizes the results.
Overall, this table illustrates well the dichotomy and complementary of regular PGD and \multitargeted (see Section~\ref{sec:method}).
On \mnist, for the same number of gradient evaluations, regular PGD tends to do better than \multitargeted~-- while on \cifar, it is the opposite.
These results do seem to indicate that, for \citeauthor{madry_towards_2017}'s \mnist model, the number of restarts is much more important than the choice of surrogate loss.
As explained in Appendix~\ref{sec:linearity}, this model does not behave linearly and is not smooth in the neighborhood of datapoints.
As such, the propagated adversarial input set $\mathcal{Z}(x)$ can be highly non-convex and, thus, more restarts are required to avoid local minima.
In fact, combining both regular PGD and \multitargeted into a PGD+MT attack does not provide a significant advantage (beyond adding a few additional restarts).
On both \mnist and \cifar, MT itself achieves state-of-the-art numbers.
For both datasets, the best attack misses only a handful of adversarial examples (as visible by the small gap in accuracy with the ``aggregated'' column, which aggregates results across the three attack variants).

\begin{table}[t]
\begin{center}
\footnotesize{
\begin{tabular}{lc|cc|ccc}
    \hline
    \cellcolor{Highlight} & \cellcolor{Highlight} & \multicolumn{5}{c}{\cellcolor{Highlight} \bf Accuracy under attack} \\
    
    \cellcolor{Highlight} \mnist & \cellcolor{Highlight} {\bf $\epsilon$} & \cellcolor{Highlight} \textbf{PGD$^{1000\times1800}$} & \cellcolor{Highlight} \textbf{MT$^{1000\times200}$} & \cellcolor{Highlight} \textbf{\textbf{PGD+MT$^{1000\times200}$}} & \cellcolor{Highlight} \textrm{Aggregated} & \cellcolor{Highlight} \textrm{\citeauthor{wang_enhancing_2019}} \\
    \hline
    \citeauthor{madry_towards_2017} & 0.3 & \textbf{88.21\%} & 88.43\% & 88.36\% & 88.18\% & 88.42\% \TBstrut \\
    \hline
    \multicolumn{7}{c}{\vspace{-2mm}} \\
    \hline
    \cellcolor{Highlight} \cifar & \cellcolor{Highlight} {\bf $\epsilon$} & \cellcolor{Highlight} \textbf{PGD$^{1000\times180}$} & \cellcolor{Highlight} \textbf{MT$^{1000\times20}$} & \cellcolor{Highlight} \textbf{\textbf{PGD+MT$^{1000\times20}$}} & \cellcolor{Highlight} \textrm{Aggregated} & \cellcolor{Highlight} \textrm{\citeauthor{croce2019minimally}} \Tstrut \\
    \hline
    \citeauthor{madry_towards_2017} & 8/255 & 44.51\% & \textbf{44.05\%} & 44.03\% & 44.03\% & 44.51\% \Tstrut \\
    \citeauthor{zhang2019theoretically} & 0.031 & 53.70\% & \textbf{53.07\%} & 53.07\% & 53.05\% & 53.44\% \Bstrut \\
    \hline
\end{tabular}
}
\end{center}
\caption{Comparison with state-of-the-art attacks. On \mnist (top-half of the table), the best known attack is the ``Interval Attack'' by \cite{wang_enhancing_2019}. On \cifar (bottom-half of the table), the best known attack is the ``Fast Adaptive Boundary Attack'' by \cite{croce2019minimally}. For both dataset, we compare regular PGD with \multitargeted.
The penultimate column is the accuracy obtained when combining all attacks into one (i.e., picking the worst-case adversarial example from any of the attacks).
\label{table:leaderboard}}
\end{table}

\section{Conclusion}

This paper provides an overview of projected gradient descent and its use for rigorous testing.
We introduce the concept of \multitargeted attacks, which is a variant of PGD, and demonstrate that we can reach state-of-art results on three separate leaderboards.
We highlight under which conditions \multitargeted is more effective.
We hope that this paper can serve as an inspiration for tuning PGD-based attacks and motivates why we should not necessarily rely on existing hyper-parameter values.

\bibliographystyle{abbrvnat}
\setlength{\bibsep}{5pt} 
\setlength{\bibhang}{0pt}
\bibliography{bibliography}

\begin{thebibliography}{29}
\providecommand{\natexlab}[1]{#1}
\providecommand{\url}[1]{\texttt{#1}}
\expandafter\ifx\csname urlstyle\endcsname\relax
  \providecommand{\doi}[1]{doi: #1}\else
  \providecommand{\doi}{doi: \begingroup \urlstyle{rm}\Url}\fi

\bibitem[Athalye and Sutskever(2017)]{athalye_synthesizing_2017}
A.~Athalye and I.~Sutskever.
\newblock Synthesizing robust adversarial examples.
\newblock \emph{arXiv preprint arXiv:1707.07397}, 2017.

\bibitem[Athalye et~al.(2018)Athalye, Carlini, and
  Wagner]{athalye_obfuscated_2018}
A.~Athalye, N.~Carlini, and D.~Wagner.
\newblock Obfuscated gradients give a false sense of security: {Circumventing}
  defenses to adversarial examples.
\newblock \emph{arXiv preprint arXiv:1802.00420}, 2018.

\bibitem[Brendel et~al.(2017)Brendel, Rauber, and Bethge]{brendel2017decision}
W.~Brendel, J.~Rauber, and M.~Bethge.
\newblock Decision-based adversarial attacks: Reliable attacks against
  black-box machine learning models.
\newblock \emph{arXiv preprint arXiv:1712.04248}, 2017.

\bibitem[Brendel et~al.(2019)Brendel, Rauber, K{\"u}mmerer, Ustyuzhaninov, and
  Bethge]{brendel2019accurate}
W.~Brendel, J.~Rauber, M.~K{\"u}mmerer, I.~Ustyuzhaninov, and M.~Bethge.
\newblock Accurate, reliable and fast robustness evaluation.
\newblock \emph{arXiv preprint arXiv:1907.01003}, 2019.

\bibitem[Carlini and Wagner(2017{\natexlab{a}})]{carlini_adversarial_2017}
N.~Carlini and D.~Wagner.
\newblock Adversarial examples are not easily detected: {Bypassing} ten
  detection methods.
\newblock In \emph{Proceedings of the 10th {ACM} {Workshop} on {Artificial}
  {Intelligence} and {Security}}, pages 3--14. ACM, 2017{\natexlab{a}}.

\bibitem[Carlini and Wagner(2017{\natexlab{b}})]{carlini_towards_2017}
N.~Carlini and D.~Wagner.
\newblock Towards evaluating the robustness of neural networks.
\newblock In \emph{2017 {IEEE} {Symposium} on {Security} and {Privacy}}, pages
  39--57. IEEE, 2017{\natexlab{b}}.

\bibitem[Carlini et~al.(2019)Carlini, Athalye, Papernot, Brendel, Rauber,
  Tsipras, Goodfellow, and Madry]{carlini2019evaluating}
N.~Carlini, A.~Athalye, N.~Papernot, W.~Brendel, J.~Rauber, D.~Tsipras,
  I.~Goodfellow, and A.~Madry.
\newblock On evaluating adversarial robustness.
\newblock \emph{arXiv preprint arXiv:1902.06705}, 2019.

\bibitem[Chen et~al.(2018)Chen, Sharma, Zhang, Yi, and Hsieh]{chen2018ead}
P.-Y. Chen, Y.~Sharma, H.~Zhang, J.~Yi, and C.-J. Hsieh.
\newblock {EAD}: elastic-net attacks to deep neural networks via adversarial
  examples.
\newblock In \emph{Thirty-second AAAI conference on artificial intelligence},
  2018.

\bibitem[Croce and Hein(2019)]{croce2019minimally}
F.~Croce and M.~Hein.
\newblock Minimally distorted adversarial examples with a fast adaptive
  boundary attack.
\newblock \emph{arXiv preprint arXiv:1907.02044}, 2019.

\bibitem[Dong et~al.(2018)Dong, Liao, Pang, Su, Zhu, Hu, and
  Li]{dong2018boosting}
Y.~Dong, F.~Liao, T.~Pang, H.~Su, J.~Zhu, X.~Hu, and J.~Li.
\newblock Boosting adversarial attacks with momentum.
\newblock In \emph{Proceedings of the IEEE conference on computer vision and
  pattern recognition}, pages 9185--9193, 2018.

\bibitem[Goodfellow et~al.(2016)Goodfellow, Bengio, and
  Courville]{goodfellow_deep_2016}
I.~Goodfellow, Y.~Bengio, and A.~Courville.
\newblock \emph{Deep {Learning}}.
\newblock MIT Press, 2016.
\newblock URL \url{http://www.deeplearningbook.org}.

\bibitem[Goodfellow et~al.(2014)Goodfellow, Shlens, and
  Szegedy]{goodfellow_explaining_2014}
I.~J. Goodfellow, J.~Shlens, and C.~Szegedy.
\newblock Explaining and harnessing adversarial examples.
\newblock \emph{arXiv preprint arXiv:1412.6572}, 2014.

\bibitem[Kannan et~al.(2018)Kannan, Kurakin, and
  Goodfellow]{kannan_adversarial_2018}
H.~Kannan, A.~Kurakin, and I.~Goodfellow.
\newblock Adversarial {Logit} {Pairing}.
\newblock \emph{arXiv preprint arXiv:1803.06373}, 2018.

\bibitem[Kingma and Ba(2014)]{kingma_adam:_2014}
D.~P. Kingma and J.~Ba.
\newblock Adam: {A} method for stochastic optimization.
\newblock \emph{arXiv preprint arXiv:1412.6980}, 2014.

\bibitem[Kurakin et~al.(2016)Kurakin, Goodfellow, and
  Bengio]{kurakin_adversarial_2016}
A.~Kurakin, I.~Goodfellow, and S.~Bengio.
\newblock Adversarial examples in the physical world.
\newblock \emph{arXiv preprint arXiv:1607.02533}, 2016.

\bibitem[Madry et~al.(2017)Madry, Makelov, Schmidt, Tsipras, and
  Vladu]{madry_towards_2017}
A.~Madry, A.~Makelov, L.~Schmidt, D.~Tsipras, and A.~Vladu.
\newblock Towards deep learning models resistant to adversarial attacks.
\newblock \emph{arXiv preprint arXiv:1706.06083}, 2017.

\bibitem[Moosavi-Dezfooli et~al.(2016)Moosavi-Dezfooli, Fawzi, and
  Frossard]{moosavi_deepfool_2016}
S.-M. Moosavi-Dezfooli, A.~Fawzi, and P.~Frossard.
\newblock Deepfool: a simple and accurate method to fool deep neural networks.
\newblock In \emph{Proceedings of the IEEE conference on computer vision and
  pattern recognition}, pages 2574--2582, 2016.

\bibitem[Moosavi-Dezfooli et~al.(2018)Moosavi-Dezfooli, Fawzi, Uesato, and
  Frossard]{moosavi-dezfooli_robustness_2018}
S.-M. Moosavi-Dezfooli, A.~Fawzi, J.~Uesato, and P.~Frossard.
\newblock Robustness via curvature regularization, and vice versa.
\newblock \emph{arXiv preprint arXiv:1811.09716}, 2018.
\newblock URL \url{https://arxiv.org/pdf/1811.09716.pdf}.

\bibitem[Papernot et~al.(2015)Papernot, McDaniel, Wu, Jha, and
  Swami]{papernot_distillation_2015}
N.~Papernot, P.~McDaniel, X.~Wu, S.~Jha, and A.~Swami.
\newblock Distillation as a defense to adversarial perturbations against deep
  neural networks.
\newblock \emph{arXiv preprint arXiv:1511.04508}, 2015.

\bibitem[Papernot et~al.(2016)Papernot, McDaniel, Jha, Fredrikson, Celik, and
  Swami]{papernot_limitations_2016}
N.~Papernot, P.~McDaniel, S.~Jha, M.~Fredrikson, Z.~B. Celik, and A.~Swami.
\newblock The limitations of deep learning in adversarial settings.
\newblock In \emph{2016 IEEE European Symposium on Security and Privacy
  (EuroS\&P)}, pages 372--387. IEEE, 2016.

\bibitem[Polyak(1964)]{polyak1964some}
B.~T. Polyak.
\newblock Some methods of speeding up the convergence of iteration methods.
\newblock \emph{USSR Computational Mathematics and Mathematical Physics},
  4\penalty0 (5):\penalty0 1--17, 1964.

\bibitem[Qin et~al.(2019)Qin, Martens, Gowal, Krishnan, Fawzi, De, Stanforth,
  Kohli, et~al.]{qin2019adversarial}
C.~Qin, J.~Martens, S.~Gowal, D.~Krishnan, A.~Fawzi, S.~De, R.~Stanforth,
  P.~Kohli, et~al.
\newblock Adversarial robustness through local linearization.
\newblock \emph{arXiv preprint arXiv:1907.02610}, 2019.

\bibitem[Szegedy et~al.(2013)Szegedy, Zaremba, Sutskever, Bruna, Erhan,
  Goodfellow, and Fergus]{szegedy_intriguing_2013}
C.~Szegedy, W.~Zaremba, I.~Sutskever, J.~Bruna, D.~Erhan, I.~Goodfellow, and
  R.~Fergus.
\newblock Intriguing properties of neural networks.
\newblock \emph{arXiv preprint arXiv:1312.6199}, 2013.

\bibitem[Uesato et~al.(2018)Uesato, O'Donoghue, Oord, and
  Kohli]{uesato_adversarial_2018}
J.~Uesato, B.~O'Donoghue, A.~v.~d. Oord, and P.~Kohli.
\newblock Adversarial {Risk} and the {Dangers} of {Evaluating} {Against} {Weak}
  {Attacks}.
\newblock \emph{arXiv preprint arXiv:1802.05666}, 2018.

\bibitem[Uesato et~al.(2019)Uesato, Alayrac, Huang, Stanforth, Fawzi, and
  Kohli]{uesato2019labels}
J.~Uesato, J.-B. Alayrac, P.-S. Huang, R.~Stanforth, A.~Fawzi, and P.~Kohli.
\newblock Are labels required for improving adversarial robustness?
\newblock \emph{arXiv preprint arXiv:1905.13725}, 2019.

\bibitem[Wang et~al.(2019)Wang, Chen, Abdou, and Jana]{wang_enhancing_2019}
S.~Wang, Y.~Chen, A.~Abdou, and S.~Jana.
\newblock Enhancing gradient-based attacks with symbolic intervals.
\newblock \emph{arXiv preprint arXiv:1906.02282}, 2019.

\bibitem[Xie et~al.(2018)Xie, Wu, van~der Maaten, Yuille, and
  He]{xie_feature_2018}
C.~Xie, Y.~Wu, L.~van~der Maaten, A.~Yuille, and K.~He.
\newblock Feature denoising for improving adversarial robustness.
\newblock \emph{arXiv preprint arXiv:1812.03411}, 2018.
\newblock URL \url{https://arxiv.org/pdf/1812.03411}.

\bibitem[Zhang et~al.(2019)Zhang, Yu, Jiao, Xing, Ghaoui, and
  Jordan]{zhang2019theoretically}
H.~Zhang, Y.~Yu, J.~Jiao, E.~P. Xing, L.~E. Ghaoui, and M.~I. Jordan.
\newblock Theoretically principled trade-off between robustness and accuracy.
\newblock \emph{arXiv preprint arXiv:1901.08573}, 2019.

\bibitem[Zheng et~al.(2018)Zheng, Chen, and Ren]{zheng_distributionally_2018}
T.~Zheng, C.~Chen, and K.~Ren.
\newblock Distributionally adversarial attack.
\newblock \emph{arXiv preprint arXiv:1808.05537}, 2018.

\end{thebibliography}

\clearpage
\appendix
\section{Combining regular PGD and \multitargeted}

As shown in Section~\ref{sec:method}, there exists cases where the margin loss is more efficient than using each logit difference (as proposed by \multitargeted).
Algorithm~\ref{alg:pgd_mt} below combines both approaches into a single PGD+MT$^{K \times N_\textrm{i}}$ attack.

\begin{algorithm}[h]
\caption{PGD+MT attack}
\begin{algorithmic}[1]
\Require A nominal input $x$ and label $y$, a model $f_\theta(x)$ and a threat model $\mathcal{S}(x)$
\Ensure Possible attack $\hat{\xi} \in \mathcal{S}(x)$
\State $\mathcal{T} \gets \{1, \ldots, C\} \setminus \{y\}$  \Comment{$C$ is the number of classes.}
\For{$i \in \{1, \ldots, N_\textrm{i} \}$}
  \For{$j \in \{1, \ldots, C\}$}
    \If{$j = C$}
        \State Use surrogate loss $\hat{L}^{(i C + j + 1)}(z, y) = -z_y + \max_{t \in \mathcal{T}} z_t$ \Comment{margin loss}
    \Else
        \State Use surrogate loss $\hat{L}^{(i C + j + 1)}(z, y) = -z_y + z_t$ with $t = \mathcal{T}_j$ \Comment{logit difference losses}
    \EndIf
    \State Initialize optimizer \texttt{Opt}
  \State $\xi^{(0)} \gets \texttt{SampleFrom}(x, \mathcal{S})$
  \For{$k \in \{1, \ldots, K\}$}
    \State $\xi^{(k)} \gets \proj_{\mathcal{S}(x)} \left( \xi^{(k-1)} + \alpha_k \texttt{Opt} \left( \grad_{\xi^{(k-1)}} \hat{L}^{(r)}(f_\theta(\xi^{(k-1)}), y) \right) \right)$
    \If{$L(f_\theta(\xi^{(k)}), y) > L(f_\theta(\hat{\xi}, y)$}
      \State $\hat{\xi} \gets \xi^{(k)}$
    \EndIf
  \EndFor
  \EndFor
\EndFor
\end{algorithmic}
\label{alg:pgd_mt}
\end{algorithm}

\clearpage
\section{Non-convex adversarial input sets}\label{sec:nonconvex_examples}

This section shows why, under non-convex adversarial input sets, \multitargeted is not necessarily more efficient than regular PGD-based attacks -- even for linear models (for the same compute budget).
Let us consider a 2 dimensional input $x$ and a 3-way classification task.
Figure~\ref{fig:nonconvex_example} shows the margin loss for two slightly different settings.
For all panels, the adversarial input set is highlighted in blue and the region in purple is where misclassification occurs.
When $\xi^{(0)}$ is sampled within the region highlighted in red, the attack for which that panel corresponds to is successful.

At the top, in Figure~\ref{fig:nonconvex_example1_mt} and~\subref{fig:nonconvex_example1_pgd}, we illustrate an example for which regular PGD using the margin loss is more likely to find the worst-case attack (located in the top-right corner).
Indeed, the region in red is larger in panel~\subref{fig:nonconvex_example1_pgd} than in panel~\subref{fig:nonconvex_example1_mt}.
On the other hand, in Figure~\ref{fig:nonconvex_example2_mt} and~\subref{fig:nonconvex_example2_pgd}, we illustrate the opposite.
A slight shift of the input makes regular PGD less likely to find the optimal perturbation.

\begin{figure*}[b]
\centering
\begin{subfigure}{0.49\textwidth}
\centering
\includegraphics[width=\linewidth]{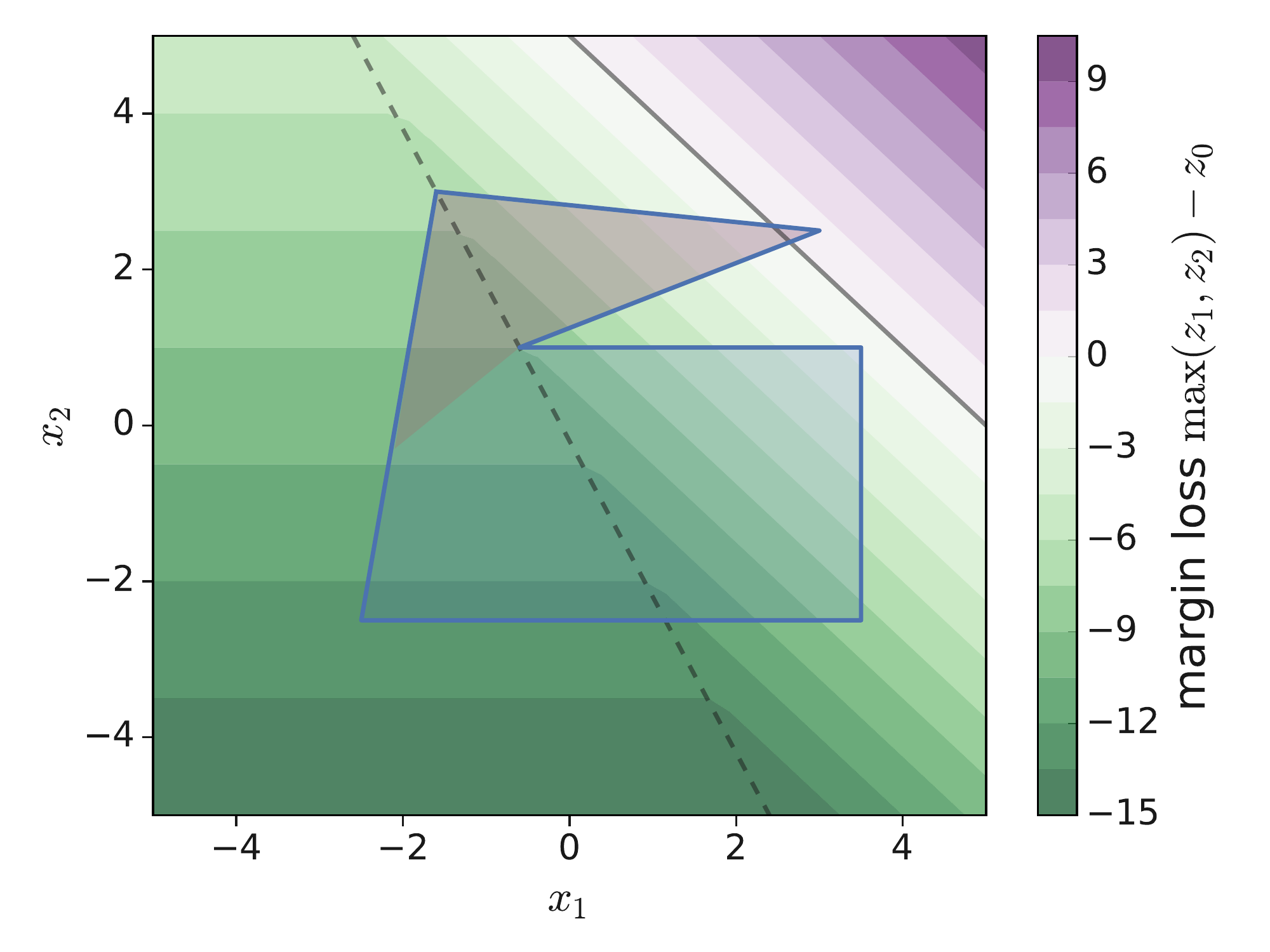}
\caption{Example 1 for \multitargeted \label{fig:nonconvex_example1_mt}}
\end{subfigure}
\begin{subfigure}{0.49\textwidth}
\centering
\includegraphics[width=\linewidth]{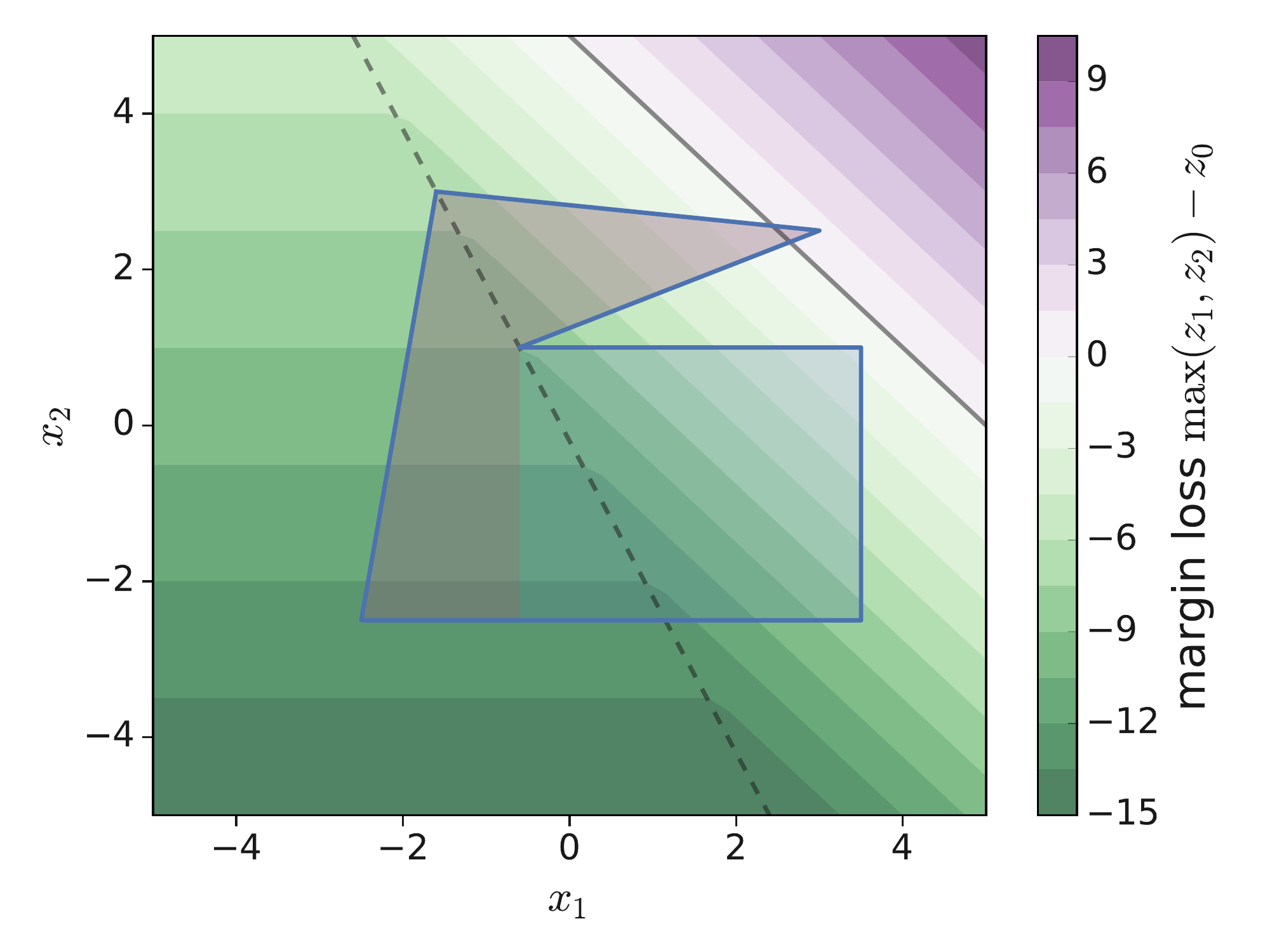}
\caption{Example 1 for regular PGD \label{fig:nonconvex_example1_pgd}}
\end{subfigure}
\begin{subfigure}{0.49\textwidth}
\centering
\includegraphics[width=\linewidth]{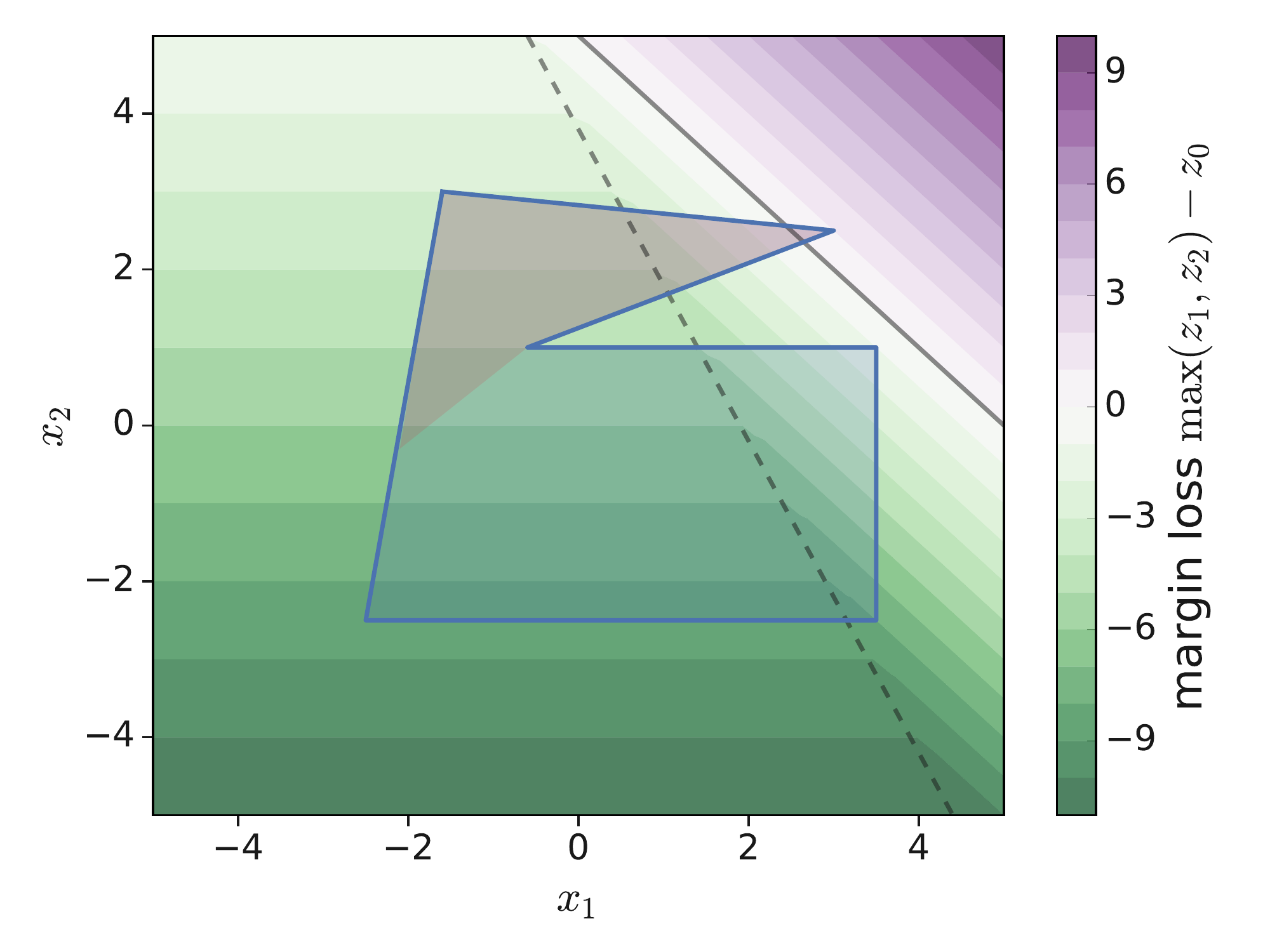}
\caption{Example 2 for \multitargeted \label{fig:nonconvex_example2_mt}}
\end{subfigure}
\begin{subfigure}{0.49\textwidth}
\centering
\includegraphics[width=\linewidth]{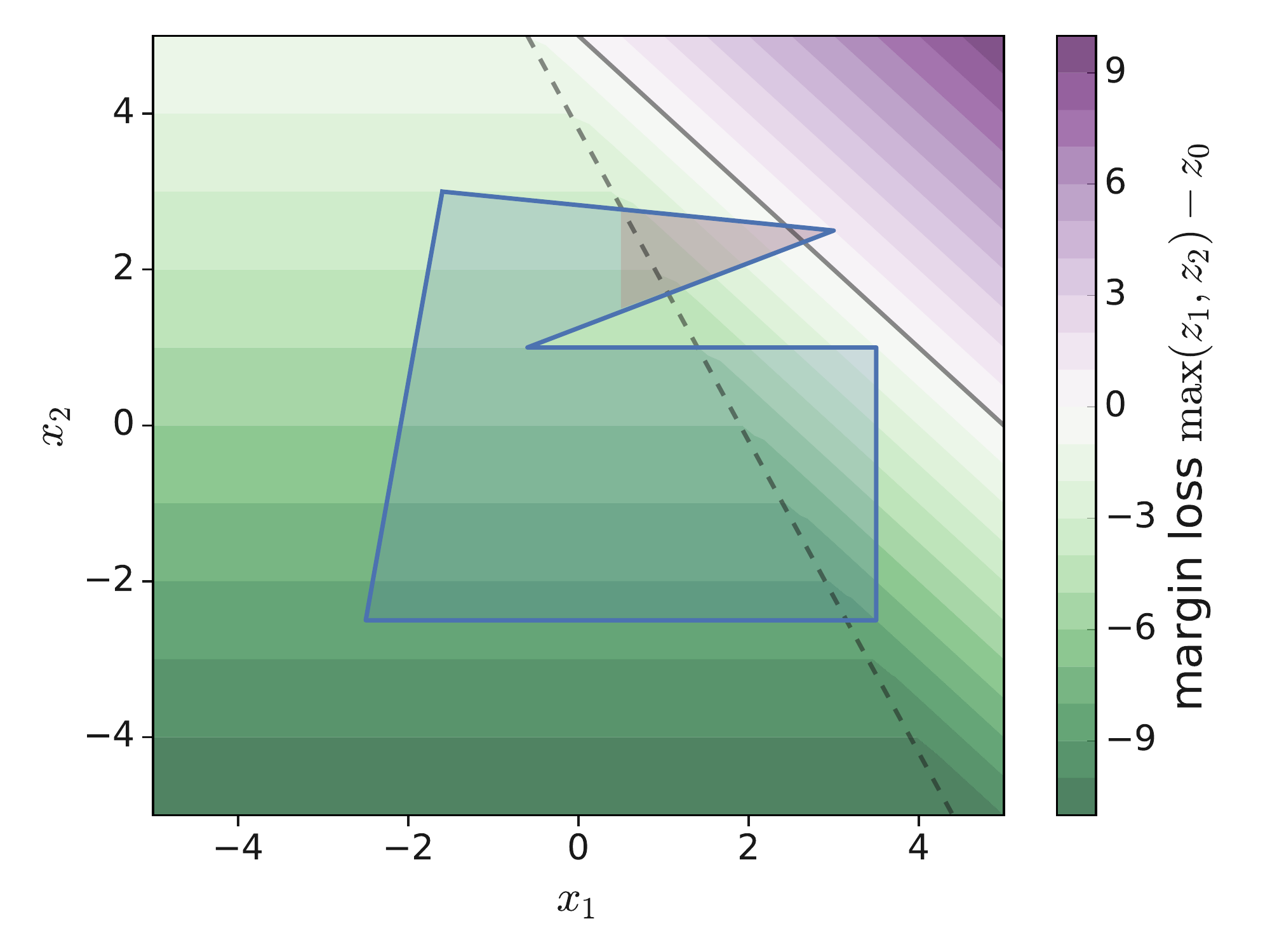}
\caption{Example 2 for regular PGD \label{fig:nonconvex_example2_pgd}}
\end{subfigure}
\caption{Examples with non-convex adversarial input sets.
For all panels, the adversarial input set is highlighted in blue and the region in purple is where misclassification occurs.
The solid line in black defines the boundary where the input is classified as class 2.
The dashed line is where the two plane $z_1 - z_0$ and $z_2 - z_0$ intersect.
When $\xi^{(0)}$ is sampled within the region highlighted in red, the attack for which that panel corresponds to is successful.
At the top, we have an example for which regular PGD is more successful.
At the bottom, we have an example for which \multitargeted is more successful.}
\label{fig:nonconvex_example}
\end{figure*}

\clearpage
\section{Effect of linearity}\label{sec:linearity}

As discussed in Section~\ref{sec:method}, the effectiveness of \multitargeted is dependent on the how the adversarial input set propagates through the network $f_\theta$.
In particular, if the propagated set $\mathcal{Z}(x) = \{ f_\theta(\xi) | \xi \in \mathcal{S}(x) \}$ is convex, \multitargeted is at least as effective as a regular PGD-based attack using the margin or cross-entropy loss (i.e., it always succeeds in finding the optimal attack given enough restarts, see Theorem~\ref{thm:local_linear}).
When the adversarial input set $\mathcal{S}(x)$ is convex, it is important to characterize the local behavior of $f_\theta$ around the nominal input $x$.
Indeed, if $f_\theta$ behaves linearly then $\mathcal{Z}(x)$ is convex.
Our hypothesis is that when the $f_\theta$ is ``close'' to linear \multitargeted is more effective.
To analyze the behavior of a given model, we gather the gradients of all logits with respect to hundred randomly selected inputs around examples from the test set.
When the singular values of the matrix that gathers these gradient are mostly concentrated around a few direction, the model should behave more linearly (in the extreme case where a single singular value is non-zero, the model is linear).

Figure~\ref{fig:singular_values_mnist} shows the singular values of three \mnist models.
The first two models, denoted ``\citeauthor{madry_towards_2017}'' and ``Ours'' are adversarially trained model of the same size (i.e., two convolutional layers followed by two fully connected layers).
The last model, denoted by ``Ours (small)'' is a much smaller model (also adversarially trained).
We observe that the smaller model tends to behave more linearly.
This is also qualitatively shown in Figure~\ref{fig:landscapes_mnist} that shows how the logits change as a function of the input.
Table~\ref{table:mnist_comparison} confirms that under the same compute budget \multitargeted works better on the smaller, more locally linear, model.
For less linear models, it is important to use more restarts and, as such, it is preferential to use a single loss (e.g., margin or cross-entropy loss) and increase the number of restarts.
We highlight that, when given more restarts per target class, \multitargeted can reach the same performance as regular PGD.

\begin{figure*}[b]
\centering
\begin{subfigure}{0.49\textwidth}
\centering
\includegraphics[width=\linewidth]{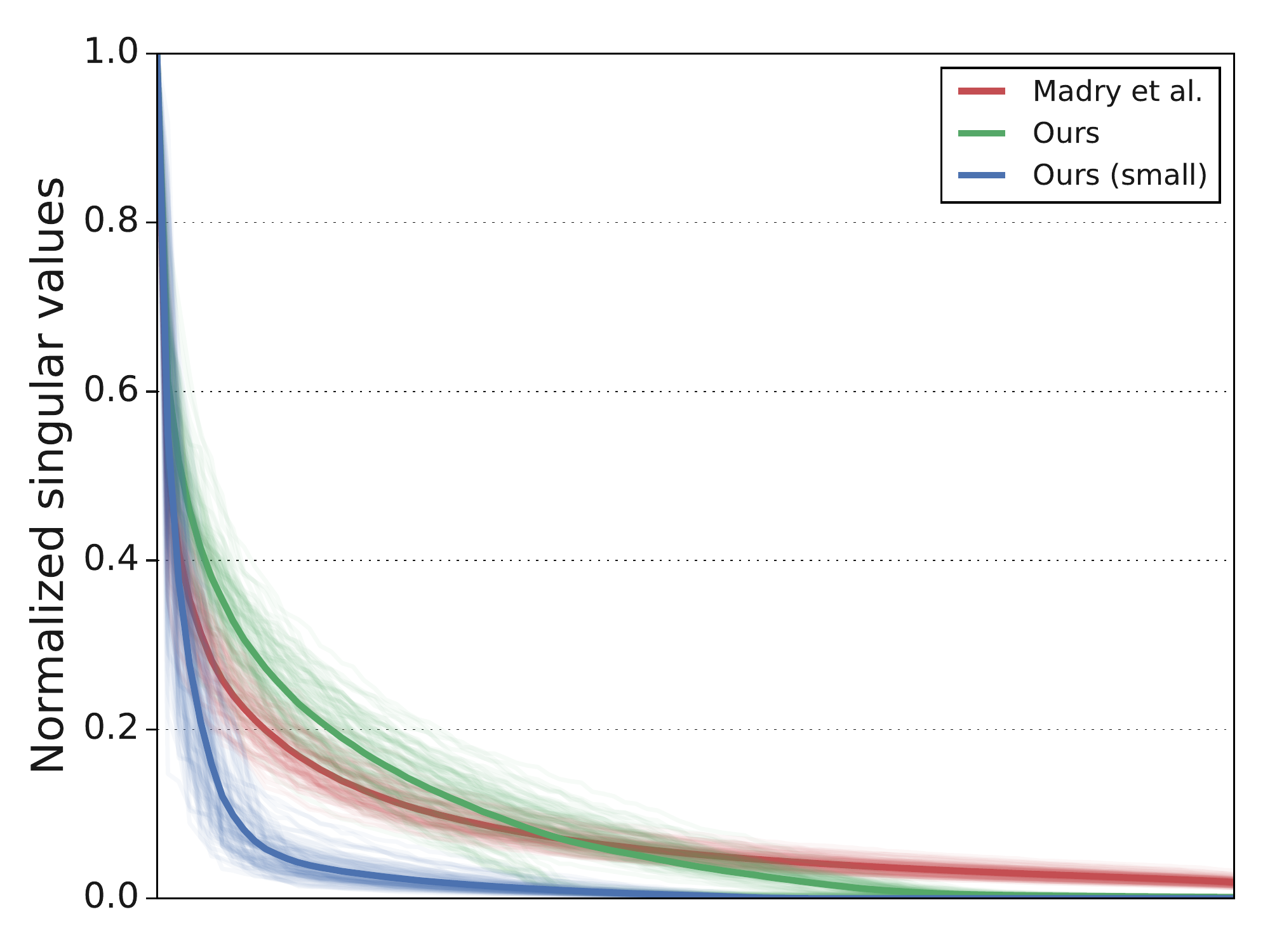}
\caption{\label{fig:singular_values_mnist}}
\end{subfigure}
\begin{subfigure}{0.49\textwidth}
\centering
\includegraphics[width=\linewidth]{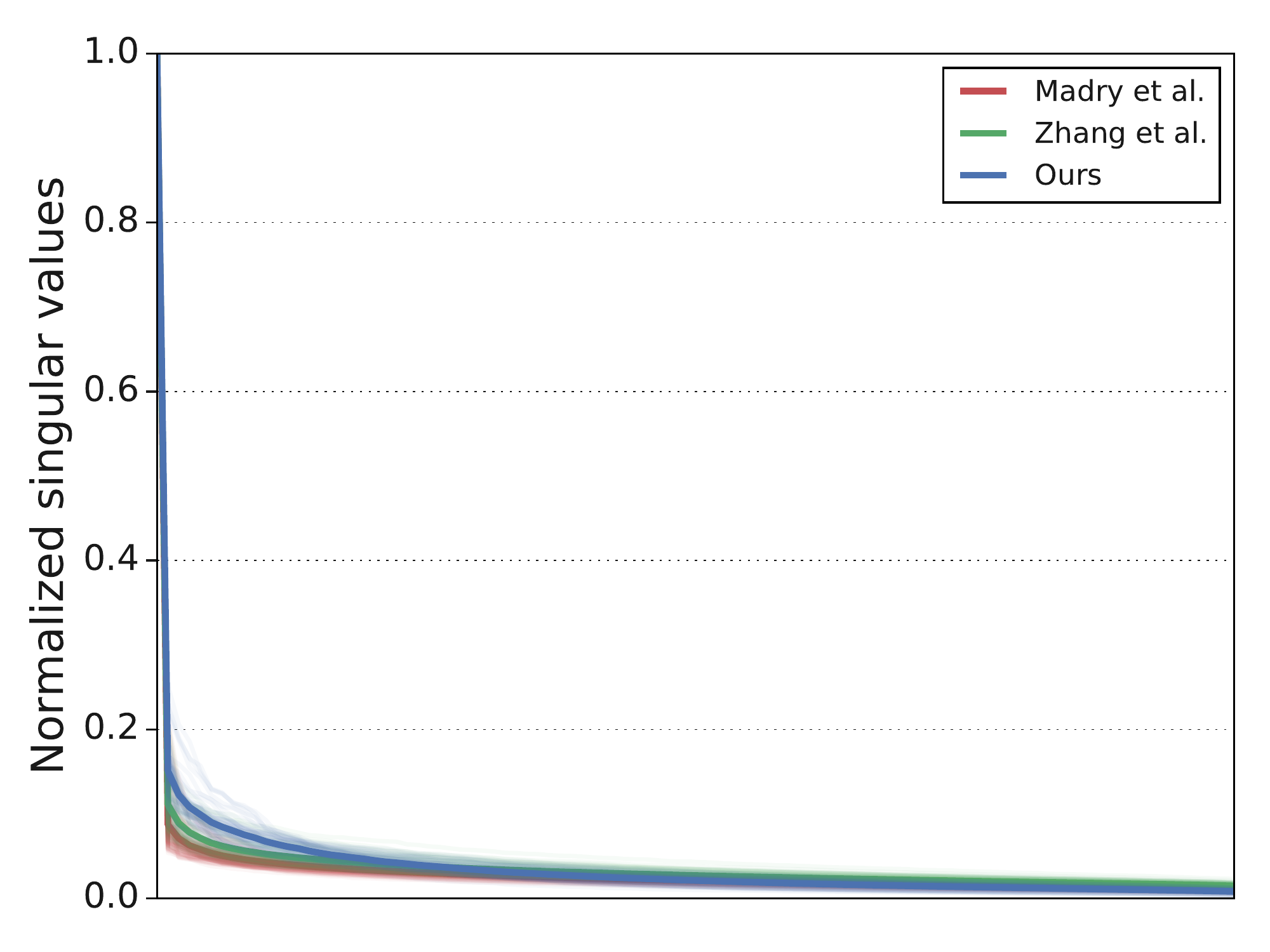}
\caption{\label{fig:singular_values_cifar}}
\end{subfigure}
\caption{Ordered singular values of a matrix gathering the gradients of each logit with respect to hundred randomly selected inputs within the adversarial input set around ten test datapoints (the thicker line represents the average over logits and datapoints). All values are normalized by the largest singular value. As the behavior of the model becomes more linear, we expect the curve to become steeper.
Panel \subref{fig:singular_values_mnist} shows the singular values of three \mnist models.
We observe that the smallest model behaves more linearly than the other two models.
Panel \subref{fig:singular_values_cifar} shows the singular values of three \cifar models.
We observe that \cifar models tend to behave more linearly than \mnist models.}
\label{fig:singular_values}
\end{figure*}

\begin{table}[b]
\begin{center}
\footnotesize{
\begin{tabular}{lc|cc|c}
    \hline
    \cellcolor{Highlight} & \cellcolor{Highlight} & \multicolumn{3}{c}{\cellcolor{Highlight} \bf Accuracy under attack} \\
    
    \cellcolor{Highlight} \mnist & \cellcolor{Highlight} {\bf $\epsilon$} & \cellcolor{Highlight} \textbf{PGD$^{1000\times180}$} & \cellcolor{Highlight} \textbf{MT$^{1000\times20}$} & \cellcolor{Highlight} \textbf{MT$^{1000\times180}$} \\
    \hline
    \citeauthor{madry_towards_2017} & 0.3 & \textbf{88.49\%} & 88.89\% & 88.45\% \\
    Ours & 0.3 & \textbf{91.98\%} & 92.96\% & 92.05\% \\
    Ours (small) & 0.3 & 74.13\% & \textbf{73.56\%} & 73.11\% \\
    \hline
\end{tabular}
}
\end{center}
\caption{Comparison of \multitargeted and regular PGD on three \mnist models. \multitargeted is more effective on the smaller model.
\label{table:mnist_comparison}}
\end{table}

In comparison, Figure~\ref{fig:singular_values_cifar} shows the singular values of three \cifar models.
We noticed in Section~\ref{sec:results} that \multitargeted is significantly better than regular PGD on robustly trained \cifar models.
This plot confirms that these models behave almost linearly (in the local neighborhood of datapoints).
This is also qualitatively shown in Figure~\ref{fig:landscapes_cifar} that shows how the logits change as a function of the input for these models.

\begin{figure*}[b]
\centering
\begin{subfigure}{0.32\textwidth}
\centering
\includegraphics[width=\linewidth]{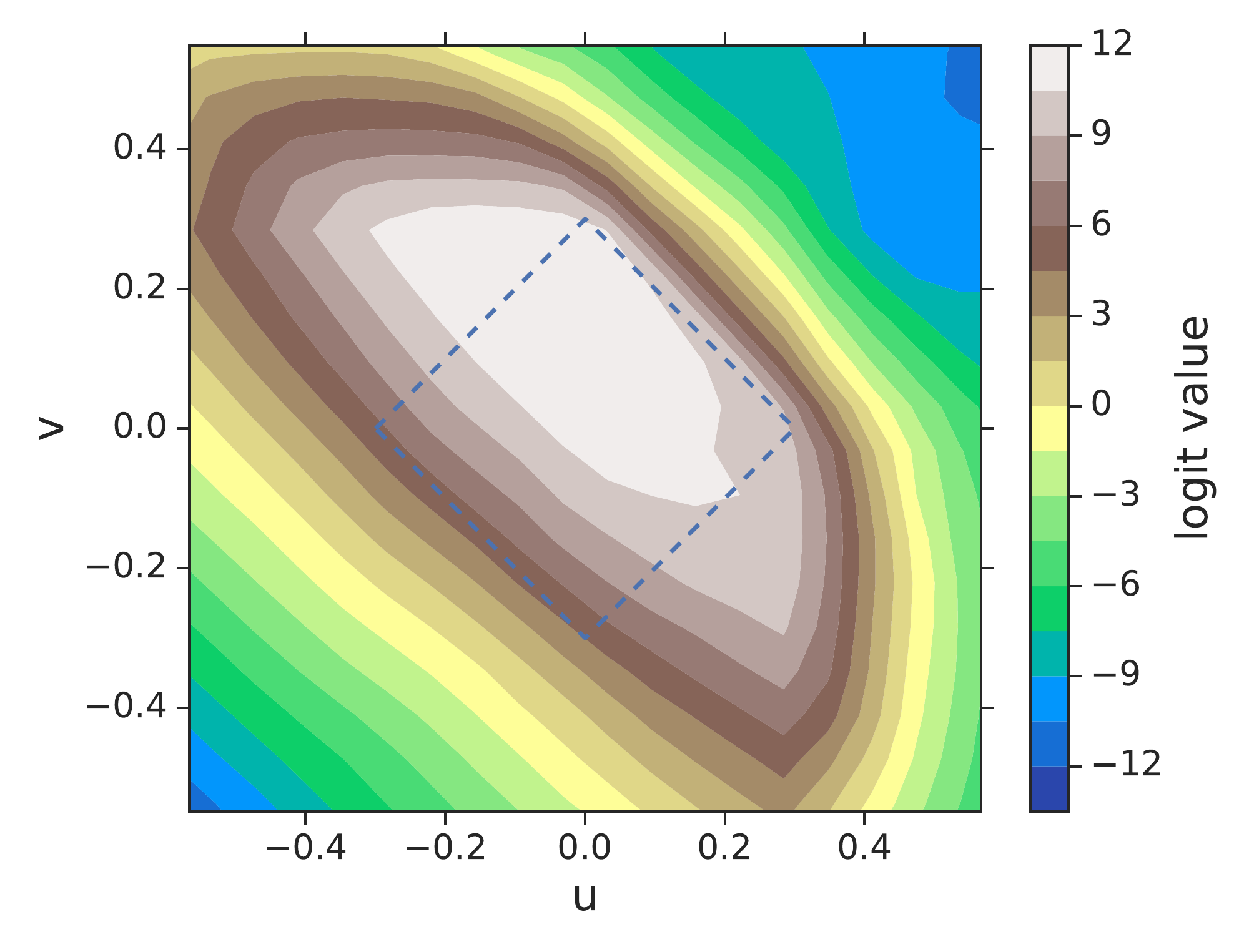}
\caption{$z_7$ on \citeauthor{madry_towards_2017}}
\end{subfigure}
\begin{subfigure}{0.32\textwidth}
\centering
\includegraphics[width=\linewidth]{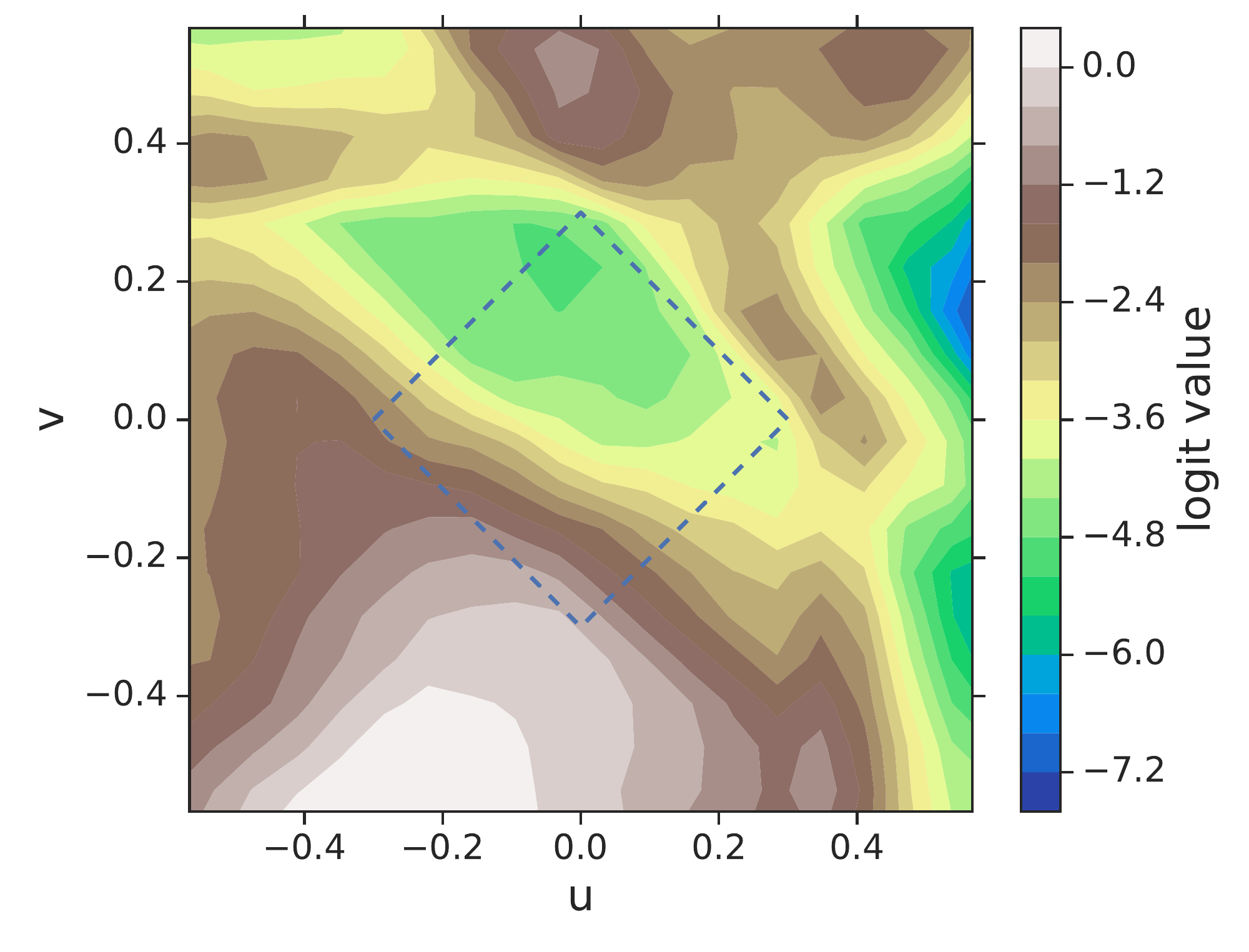}
\caption{$z_0$ on \citeauthor{madry_towards_2017}}
\end{subfigure}
\begin{subfigure}{0.32\textwidth}
\centering
\includegraphics[width=\linewidth]{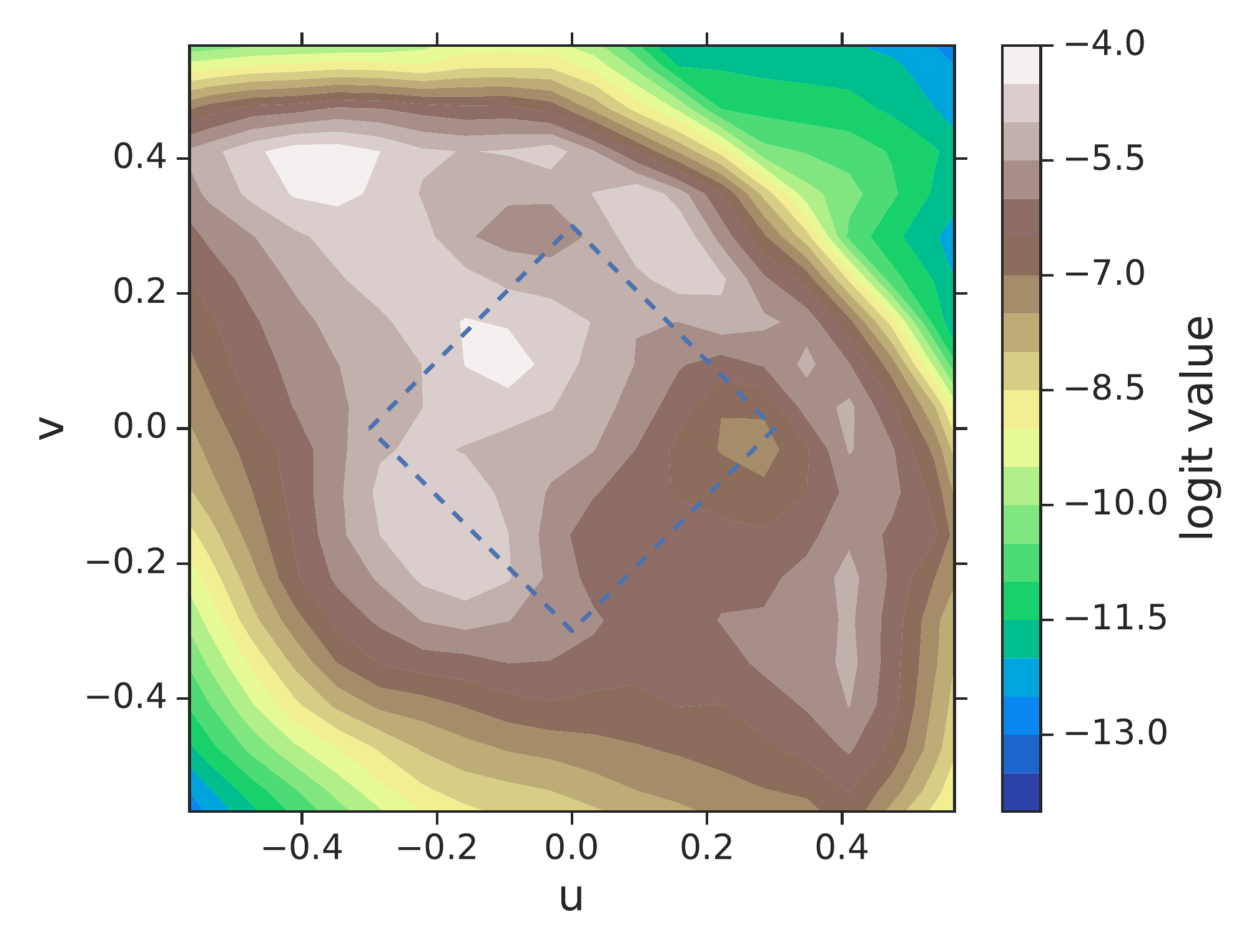}
\caption{$z_1$ on \citeauthor{madry_towards_2017}}
\end{subfigure} \\

\begin{subfigure}{0.32\textwidth}
\centering
\includegraphics[width=\linewidth]{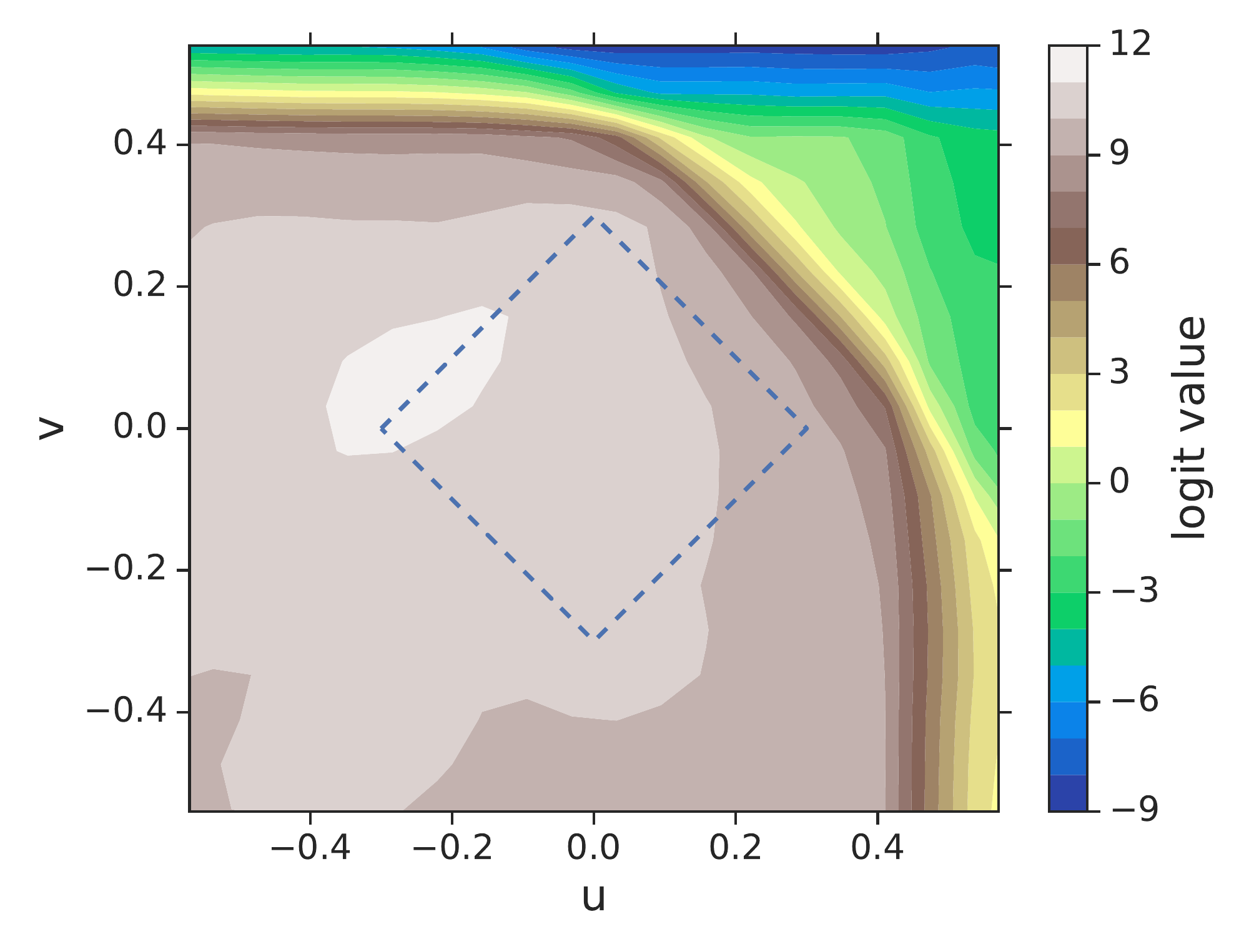}
\caption{$z_7$ on our model}
\end{subfigure}
\begin{subfigure}{0.32\textwidth}
\centering
\includegraphics[width=\linewidth]{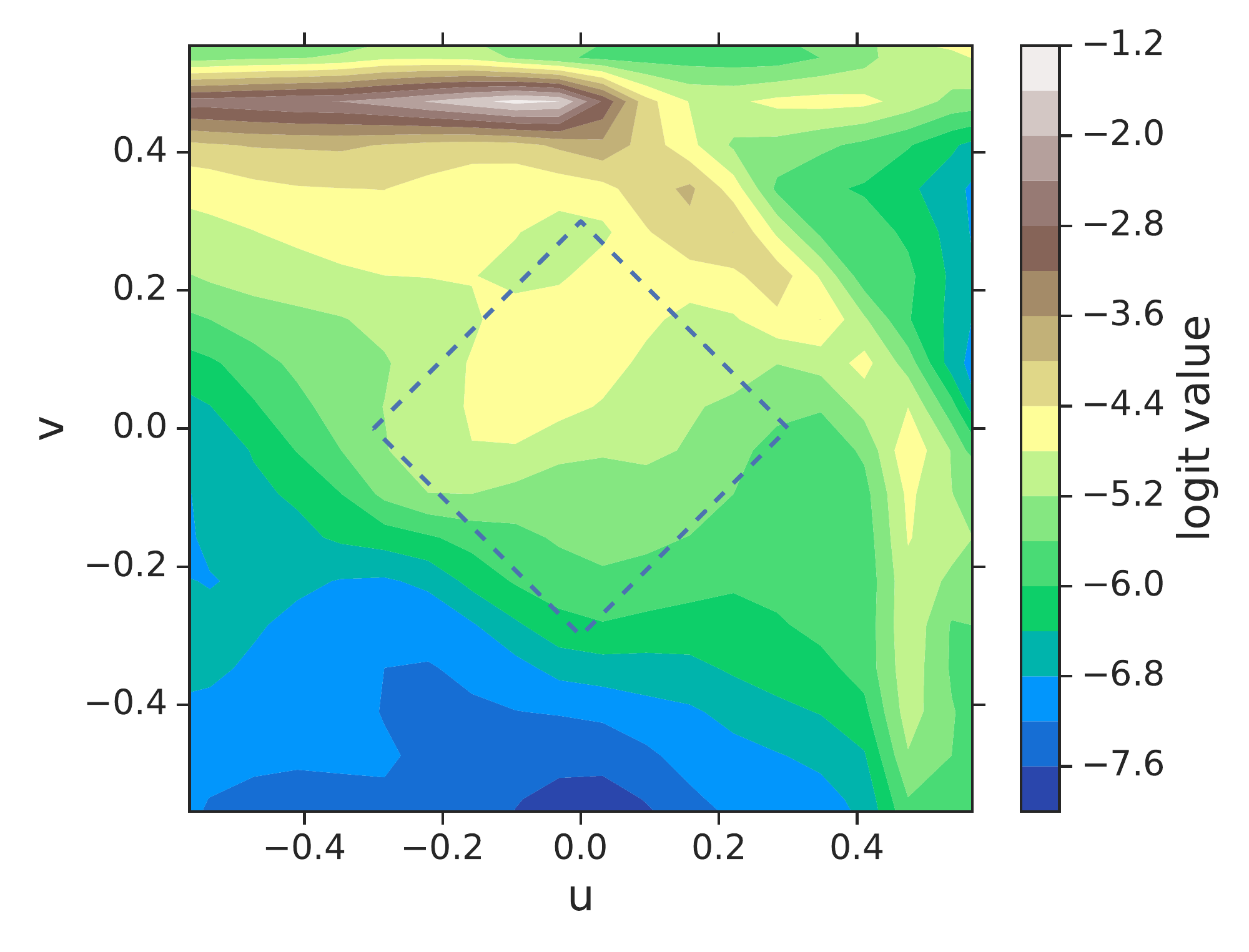}
\caption{$z_0$ on our model}
\end{subfigure}
\begin{subfigure}{0.32\textwidth}
\centering
\includegraphics[width=\linewidth]{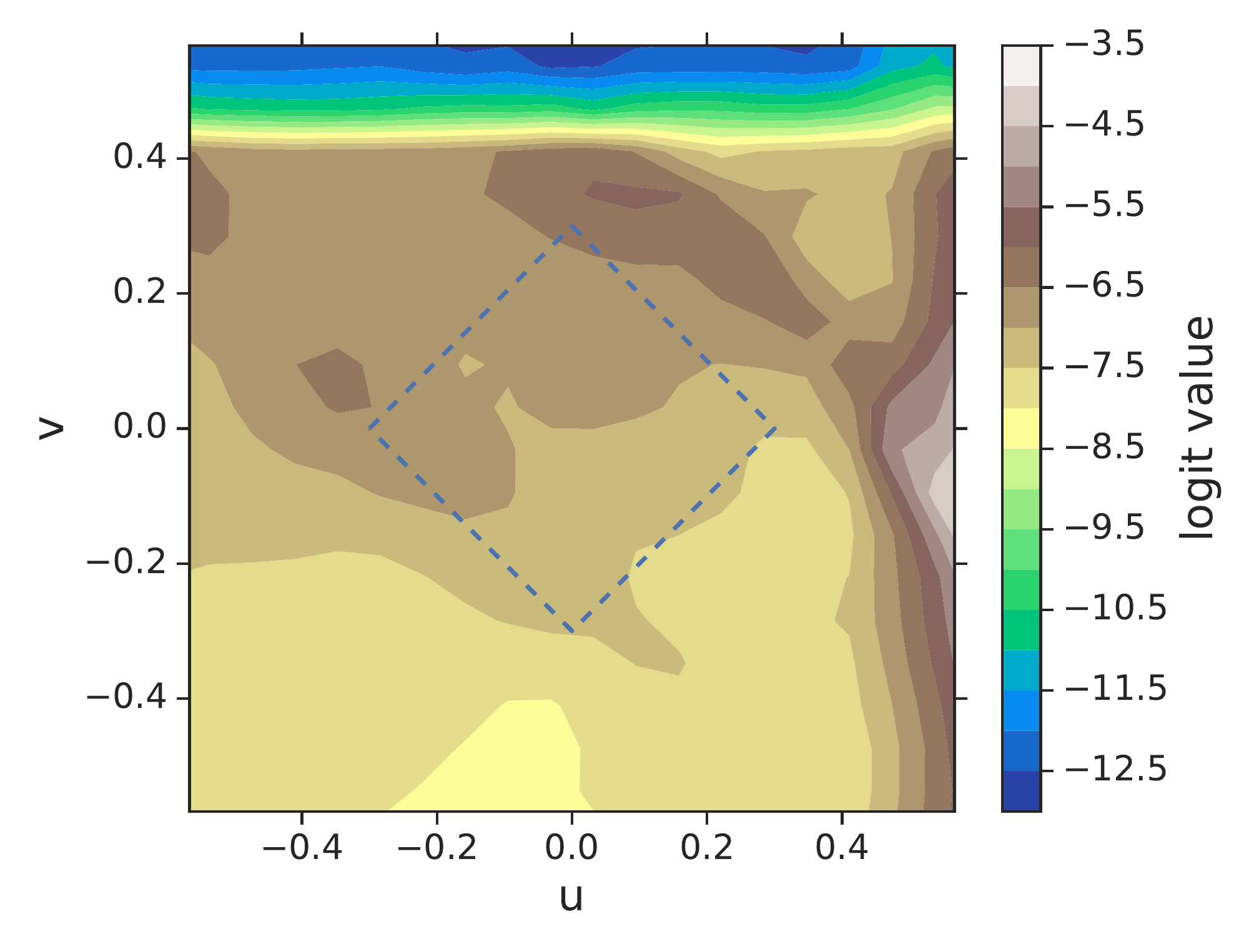}
\caption{$z_1$ on our model}
\end{subfigure} \\

\begin{subfigure}{0.32\textwidth}
\centering
\includegraphics[width=\linewidth]{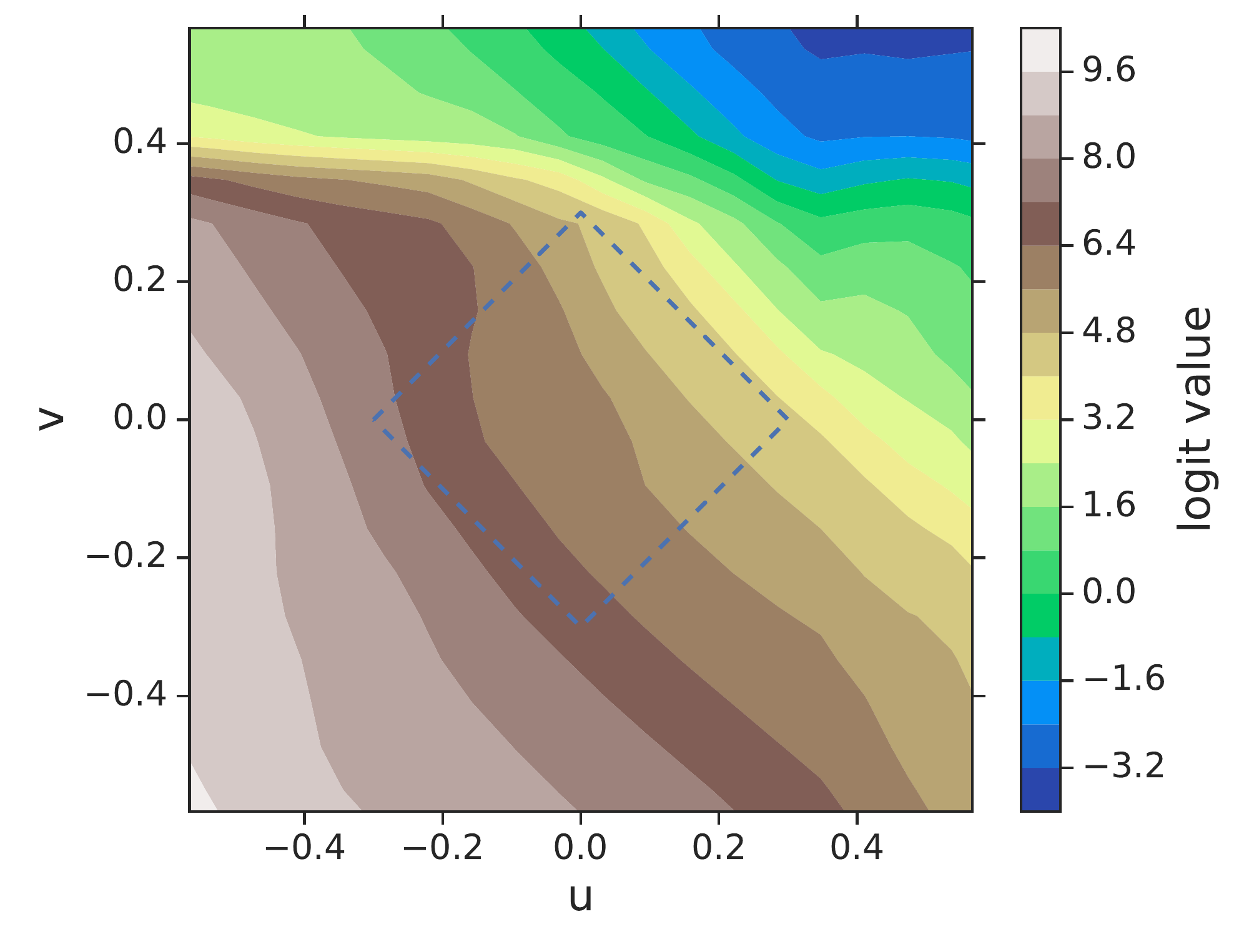}
\caption{$z_7$ on our small model}
\end{subfigure}
\begin{subfigure}{0.32\textwidth}
\centering
\includegraphics[width=\linewidth]{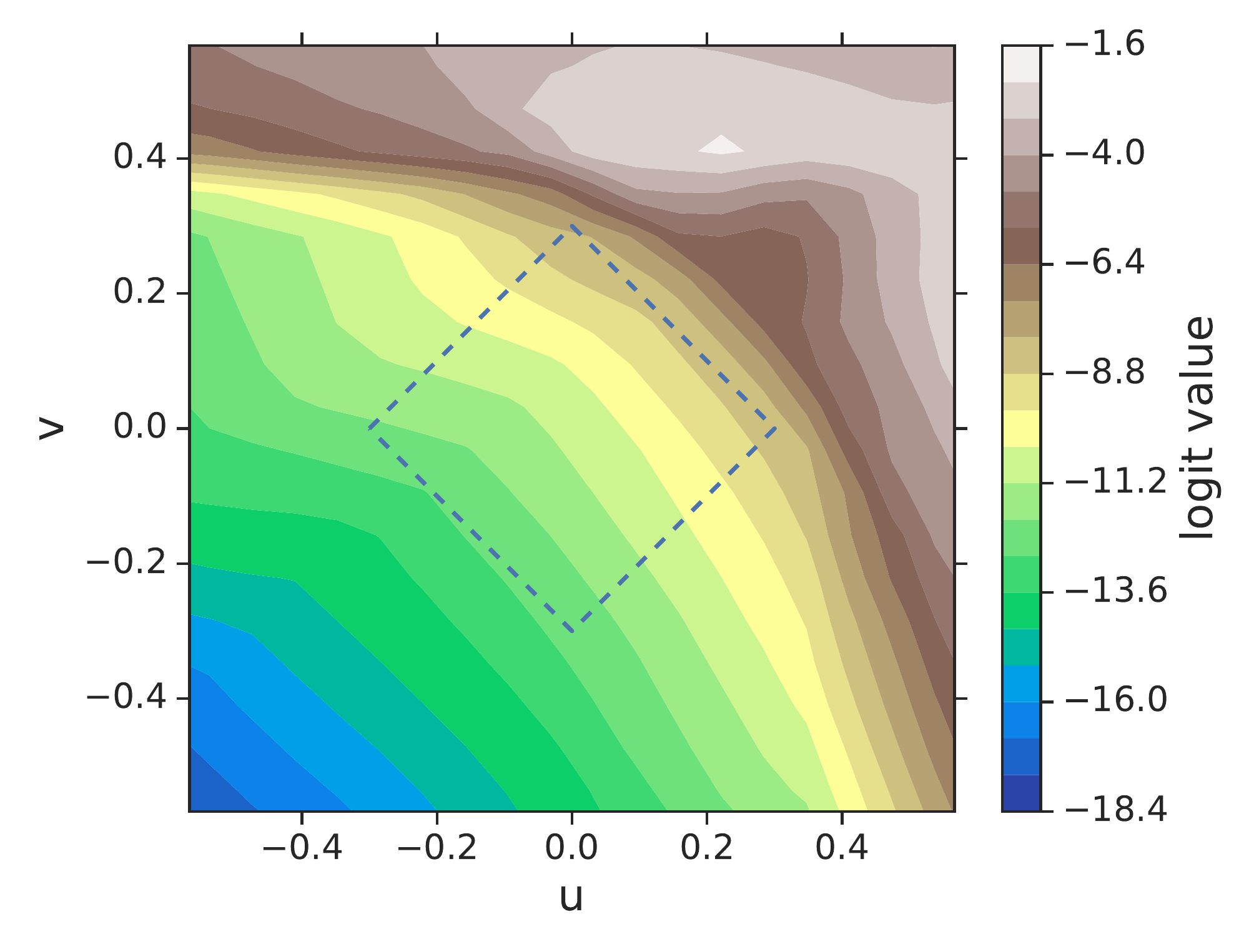}
\caption{$z_0$ on our small model}
\end{subfigure}
\begin{subfigure}{0.32\textwidth}
\centering
\includegraphics[width=\linewidth]{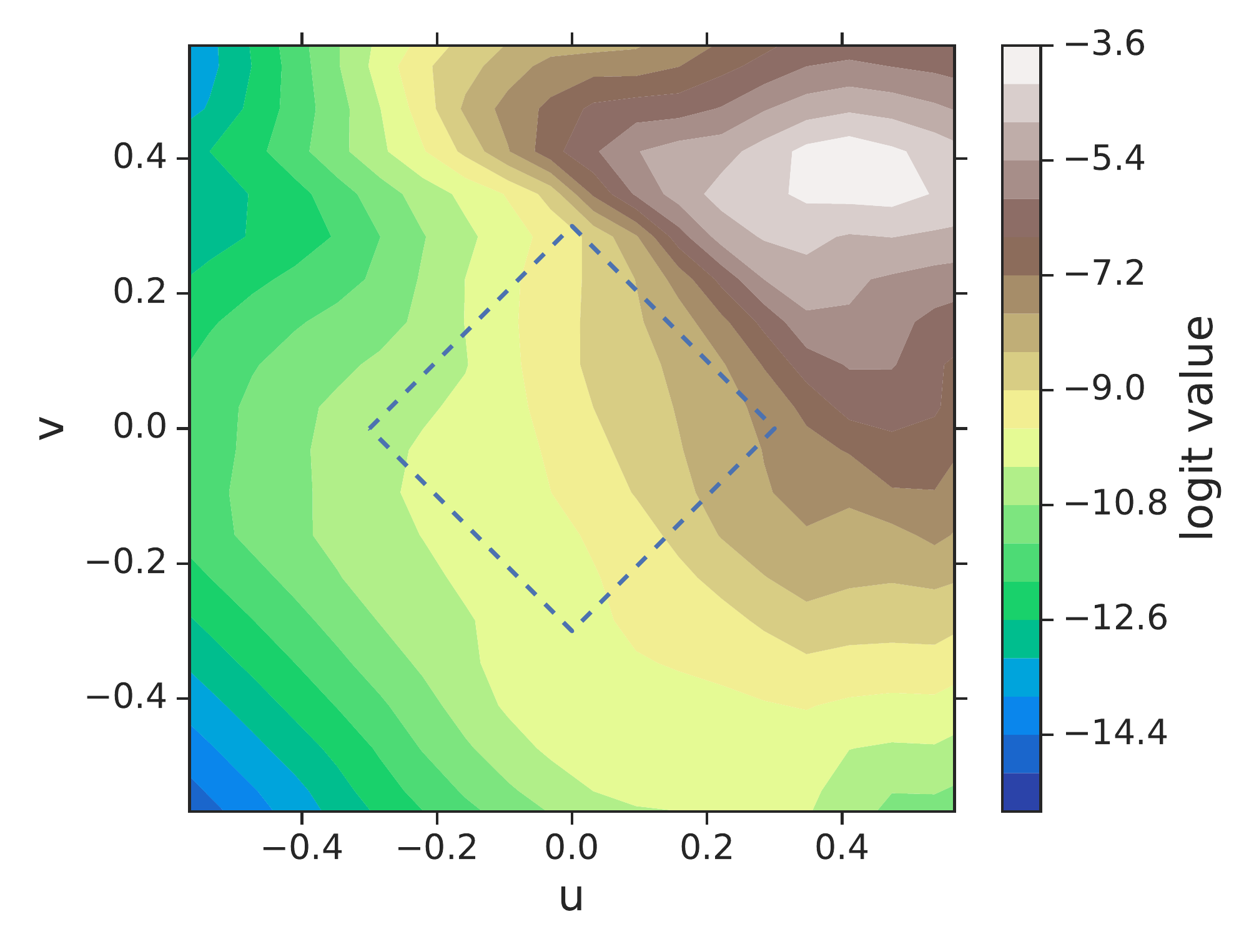}
\caption{$z_1$ on our small model}
\end{subfigure} \\
\caption{
Logit landscapes around the nominal image of a ``seven'' (first image from the \mnist test set).
It is generated by varying the input to the model, starting from the original input image toward either the worst attack found using PGD ($u$ direction) or a random Rademacher direction ($v$ direction).
Panels show the contour plots of different logits: from left to right, we have logits $z_7$, $z_0$ and $z_1$; from top to bottom, we have the \citeauthor{madry_towards_2017}'s model, our own adversarially trained model of similar size and our smaller adversarially trained model.
The diamond-shape represents the projected \linf ball of size $\epsilon = 0.3$ around the nominal image.
We observe that within this adversarial input set, the smaller model tends to behave more linearly than the other two models.
}
\label{fig:landscapes_mnist}
\end{figure*}

\begin{figure*}[t]
\centering
\begin{subfigure}{0.32\textwidth}
\centering
\includegraphics[width=\linewidth]{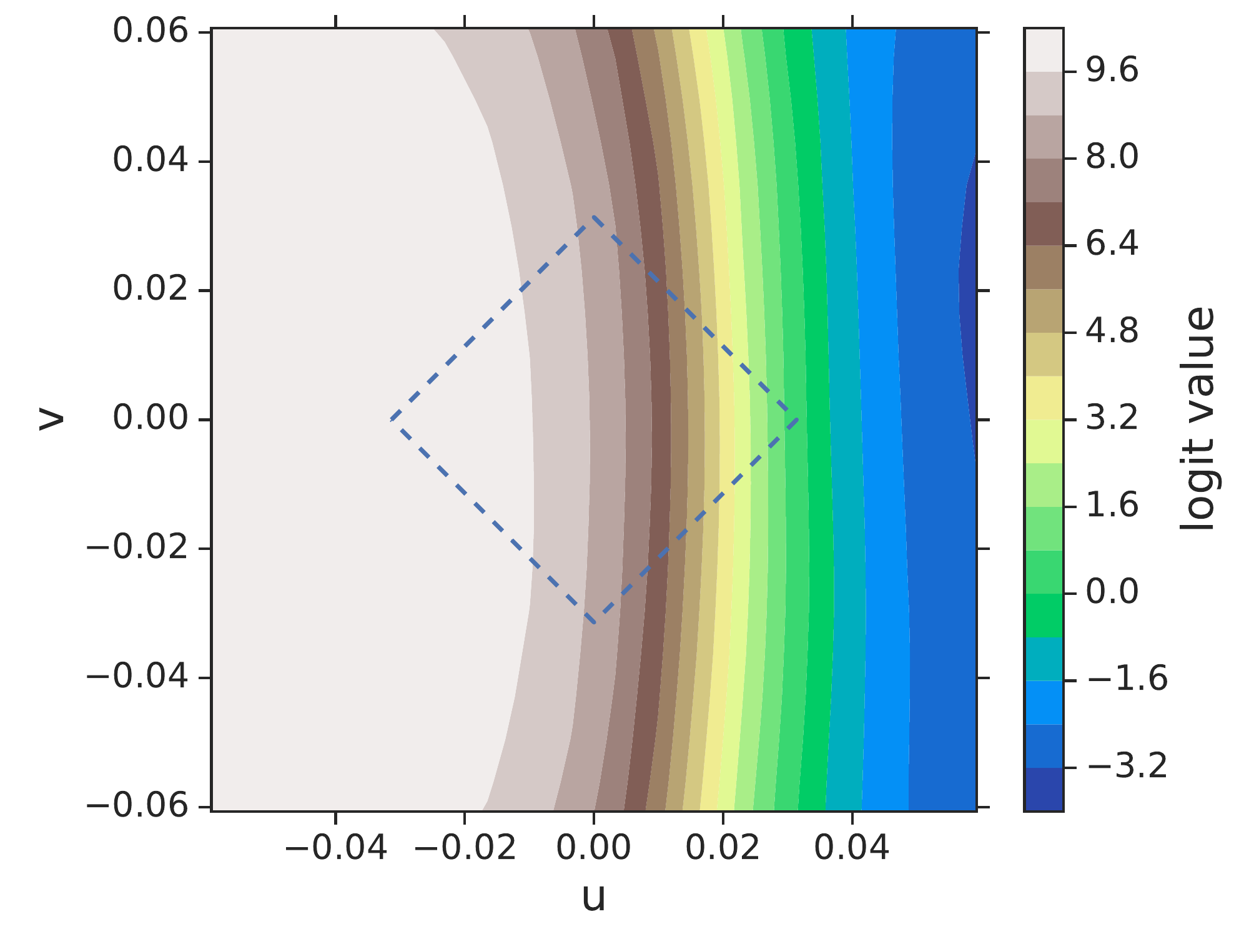}
\caption{$z_9$ on \citeauthor{madry_towards_2017}}
\end{subfigure}
\begin{subfigure}{0.32\textwidth}
\centering
\includegraphics[width=\linewidth]{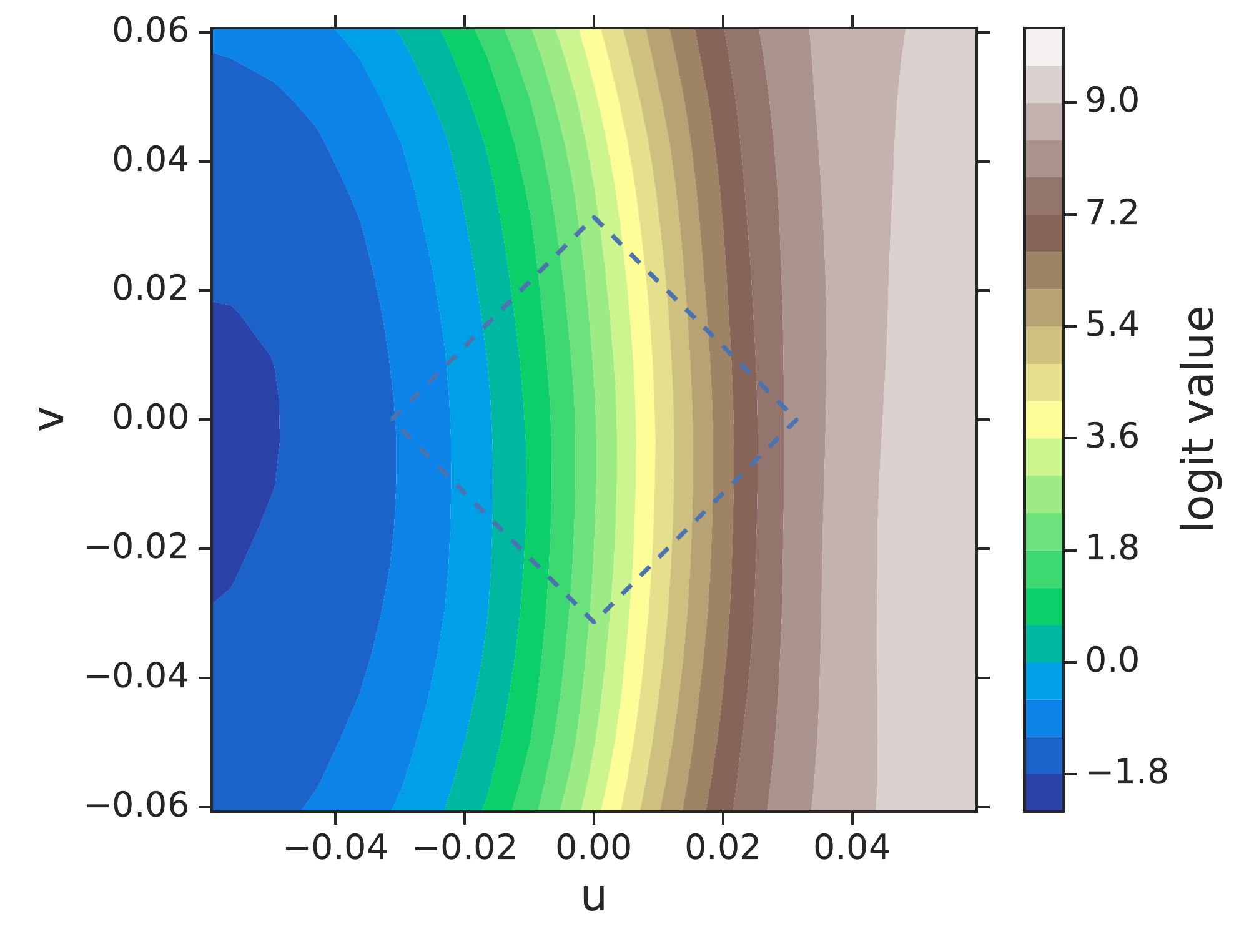}
\caption{$z_0$ on \citeauthor{madry_towards_2017}}
\end{subfigure}
\begin{subfigure}{0.32\textwidth}
\centering
\includegraphics[width=\linewidth]{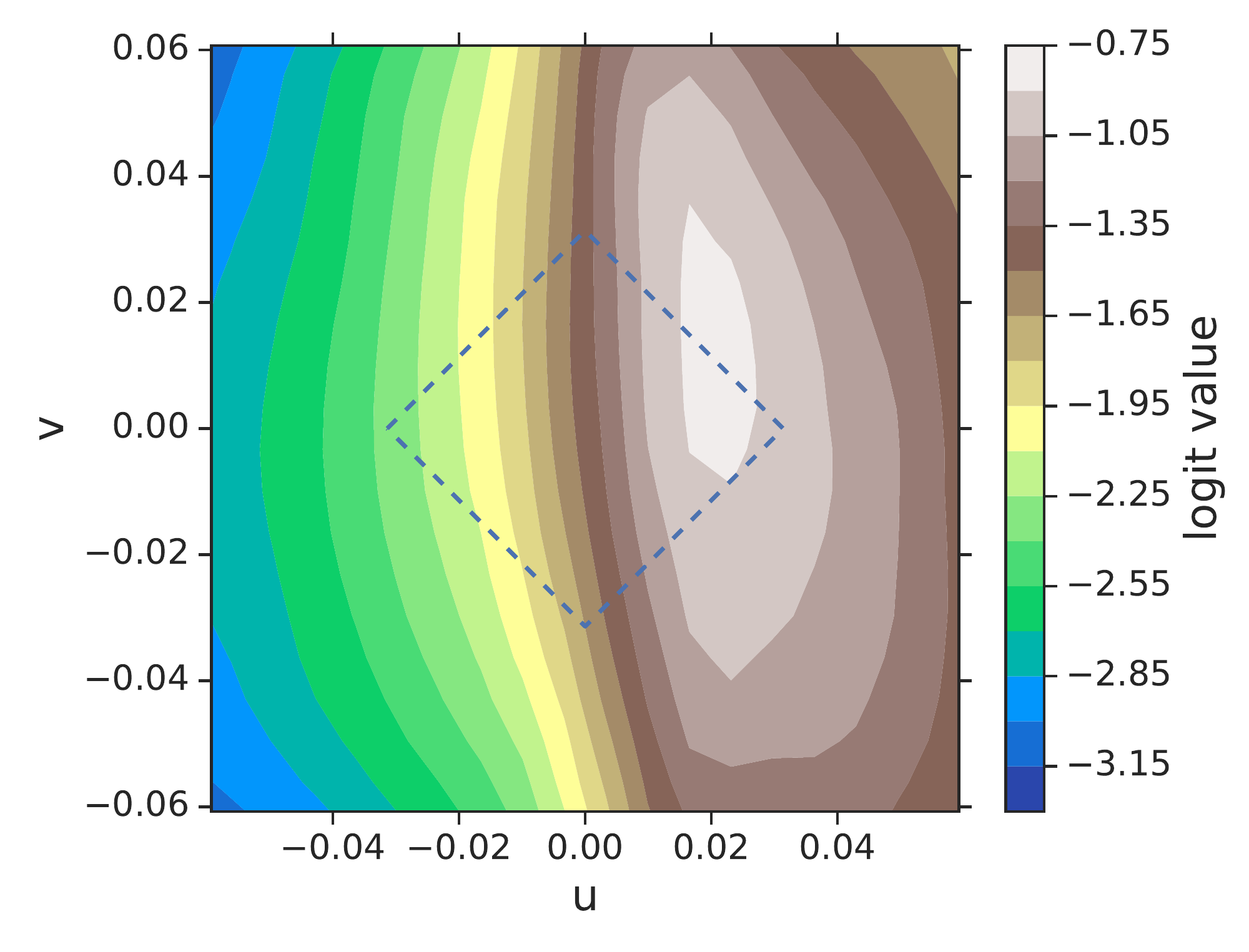}
\caption{$z_1$ on \citeauthor{madry_towards_2017}}
\end{subfigure} \\

\begin{subfigure}{0.32\textwidth}
\centering
\includegraphics[width=\linewidth]{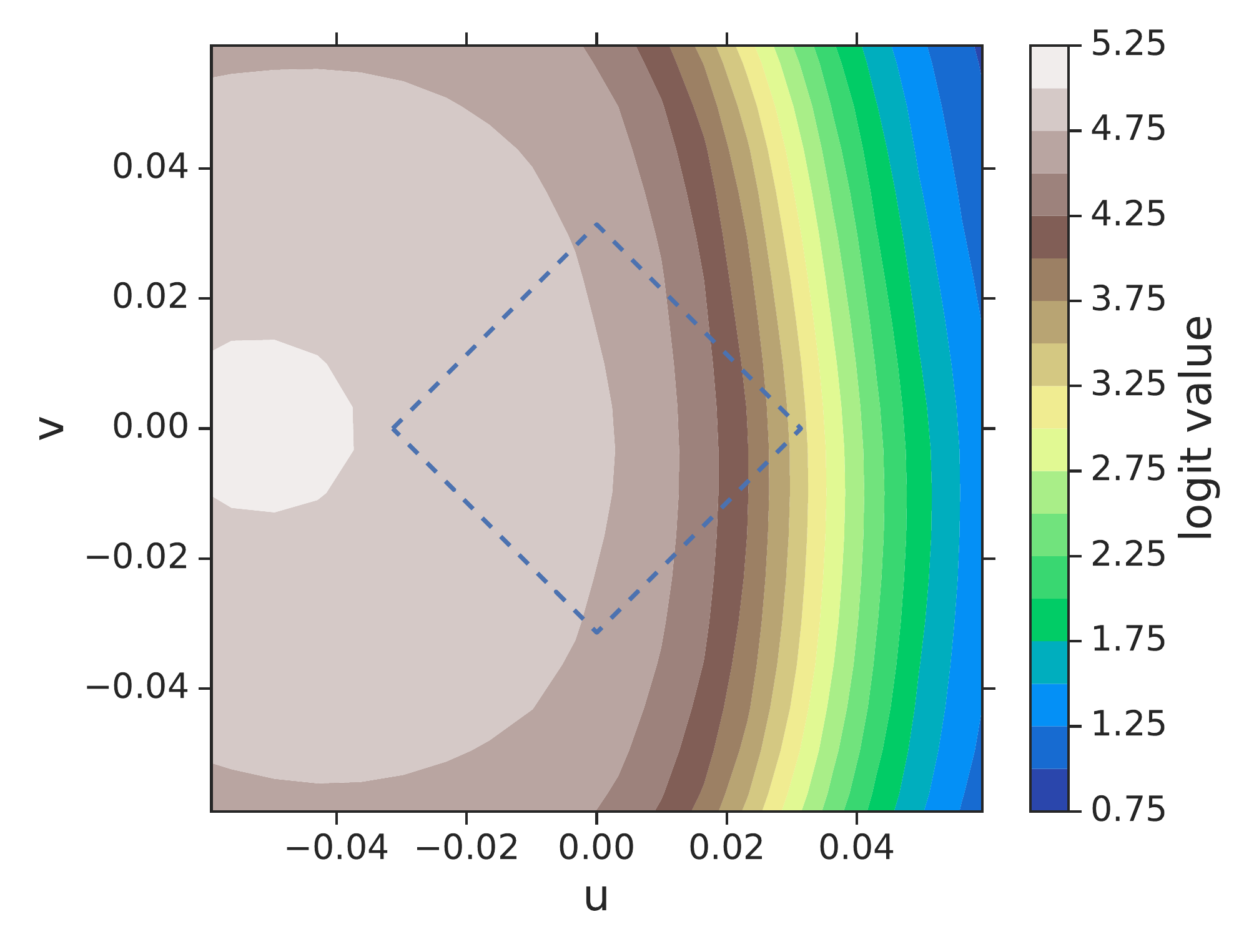}
\caption{$z_9$ on \citeauthor{zhang2019theoretically}}
\end{subfigure}
\begin{subfigure}{0.32\textwidth}
\centering
\includegraphics[width=\linewidth]{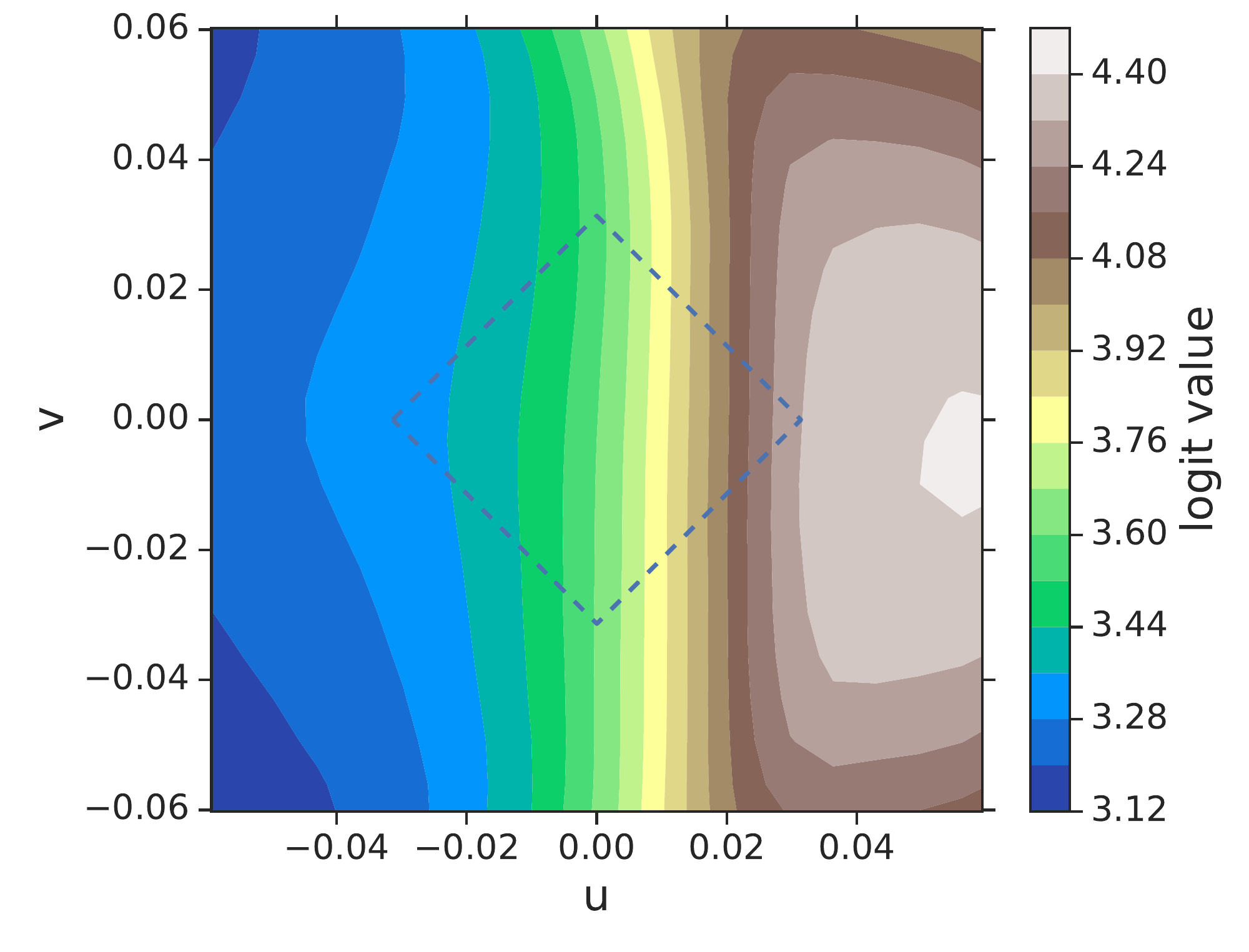}
\caption{$z_0$ on \citeauthor{zhang2019theoretically}}
\end{subfigure}
\begin{subfigure}{0.32\textwidth}
\centering
\includegraphics[width=\linewidth]{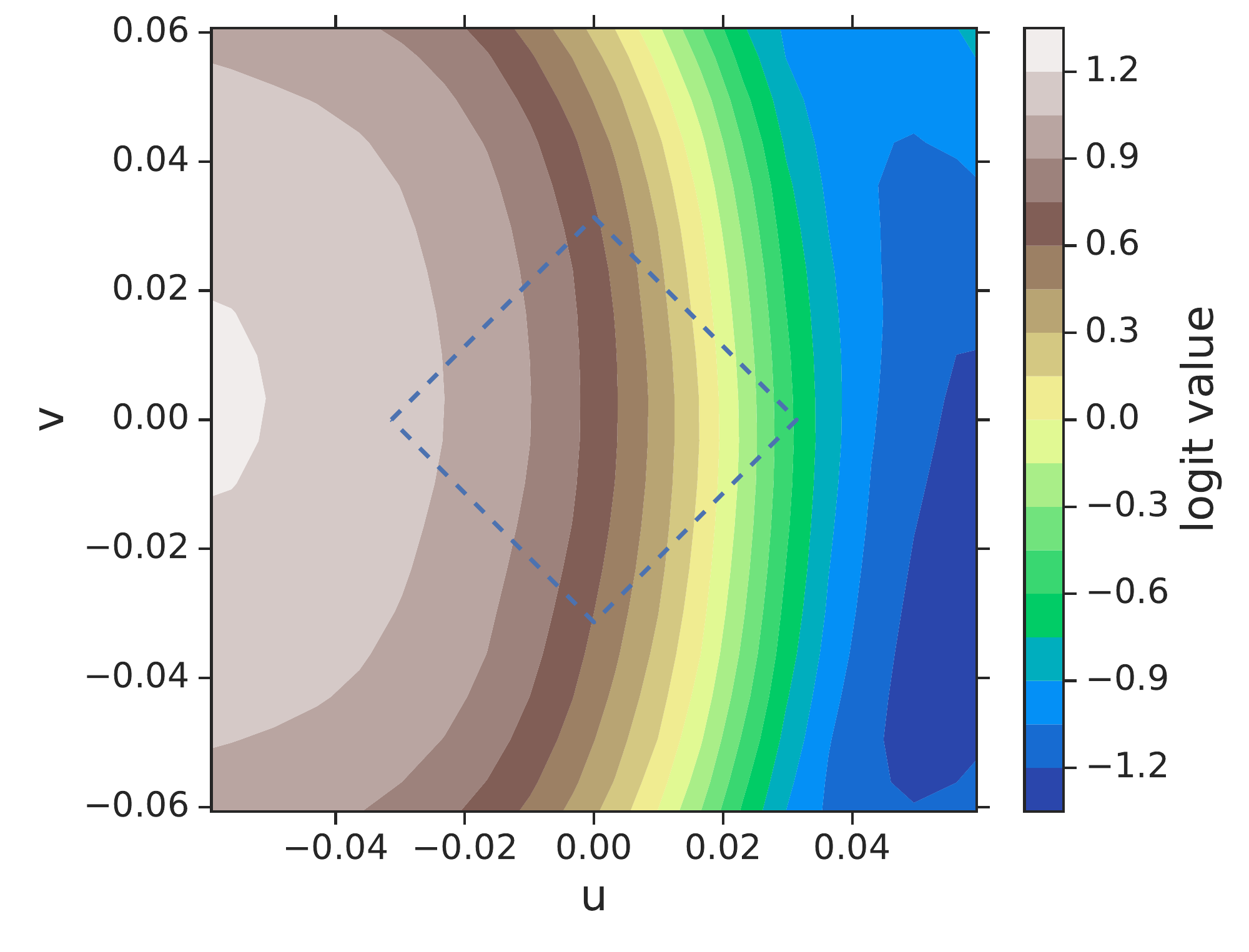}
\caption{$z_1$ on \citeauthor{zhang2019theoretically}}
\end{subfigure} \\

\begin{subfigure}{0.32\textwidth}
\centering
\includegraphics[width=\linewidth]{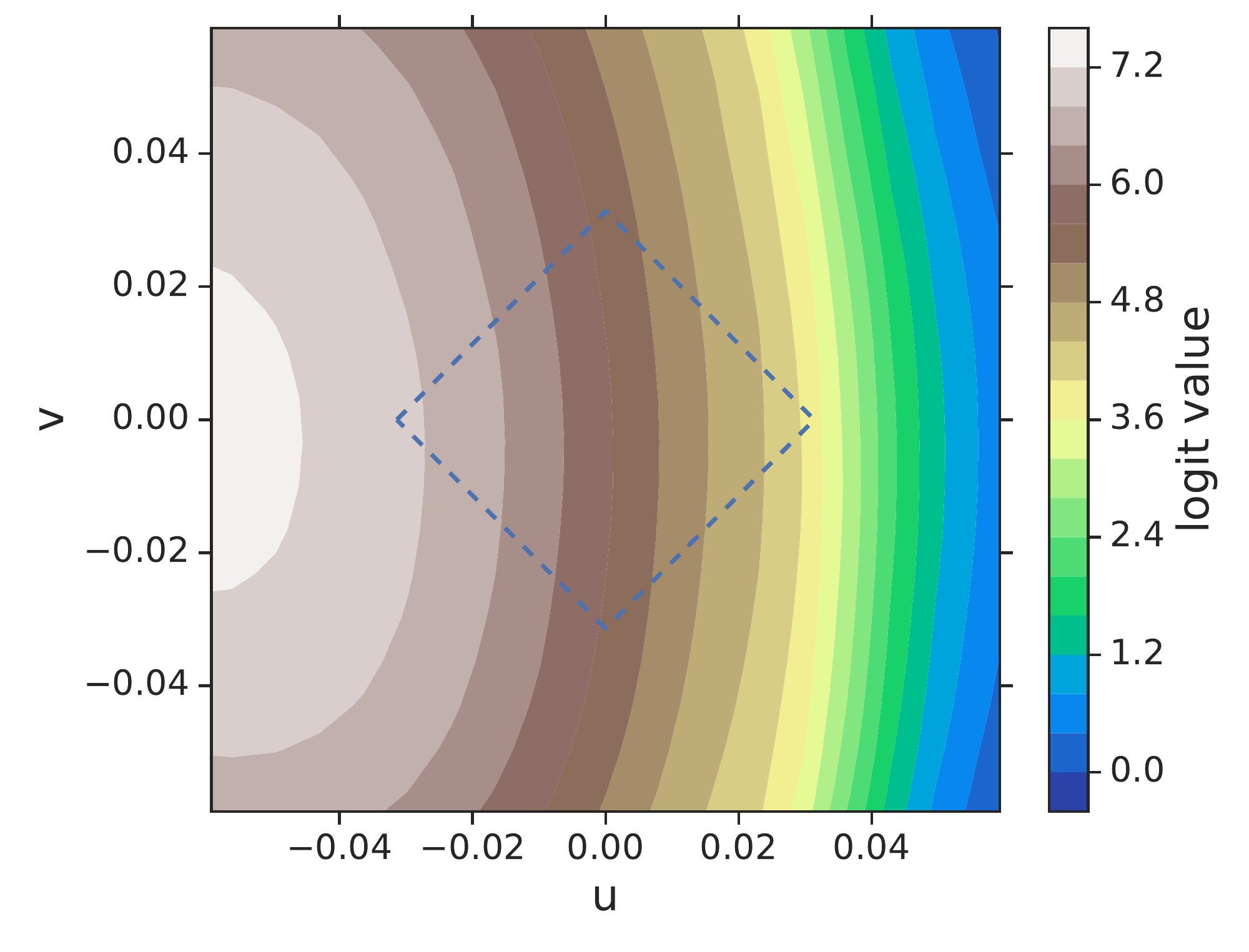}
\caption{$z_9$ on our model}
\end{subfigure}
\begin{subfigure}{0.32\textwidth}
\centering
\includegraphics[width=\linewidth]{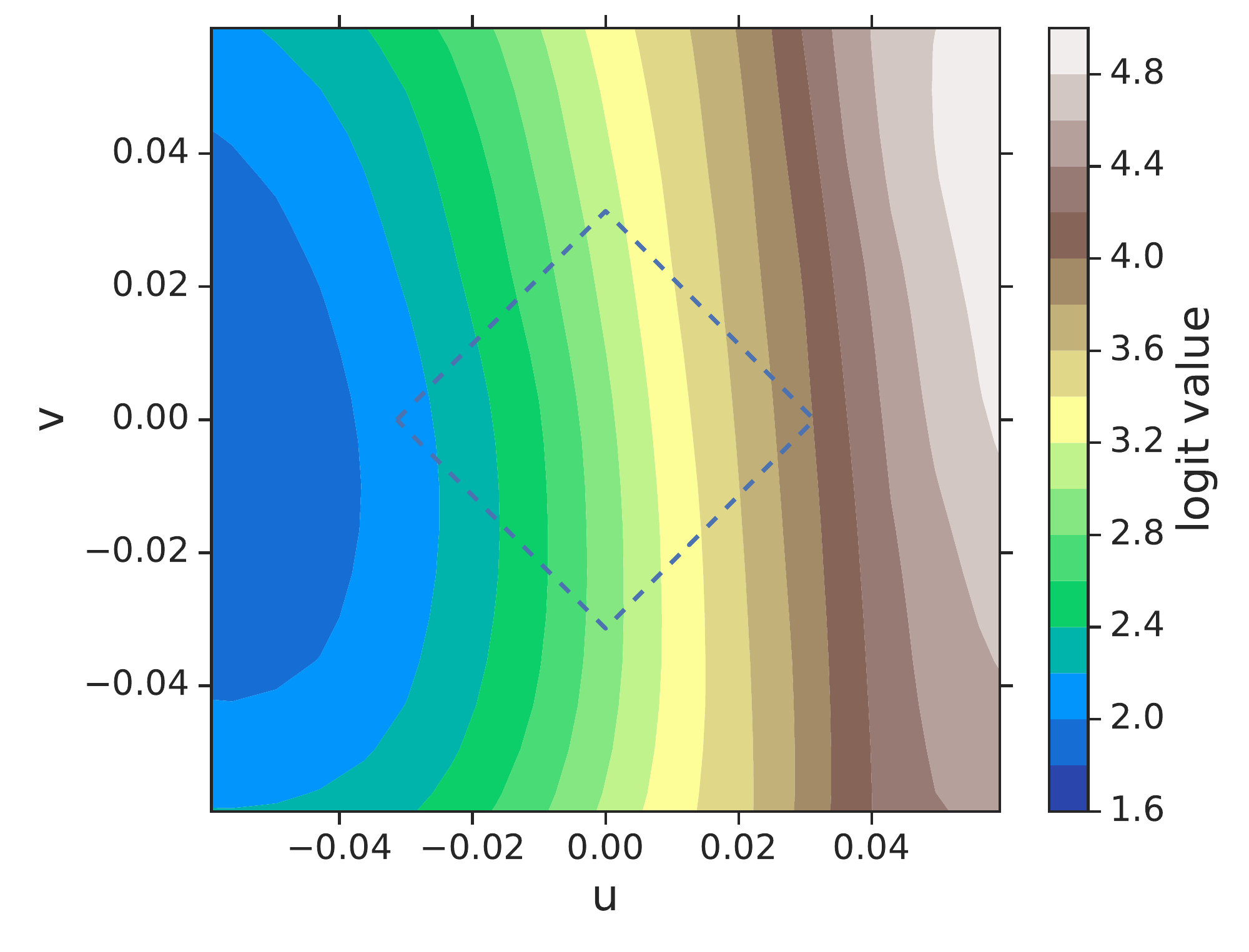}
\caption{$z_0$ on our model}
\end{subfigure}
\begin{subfigure}{0.32\textwidth}
\centering
\includegraphics[width=\linewidth]{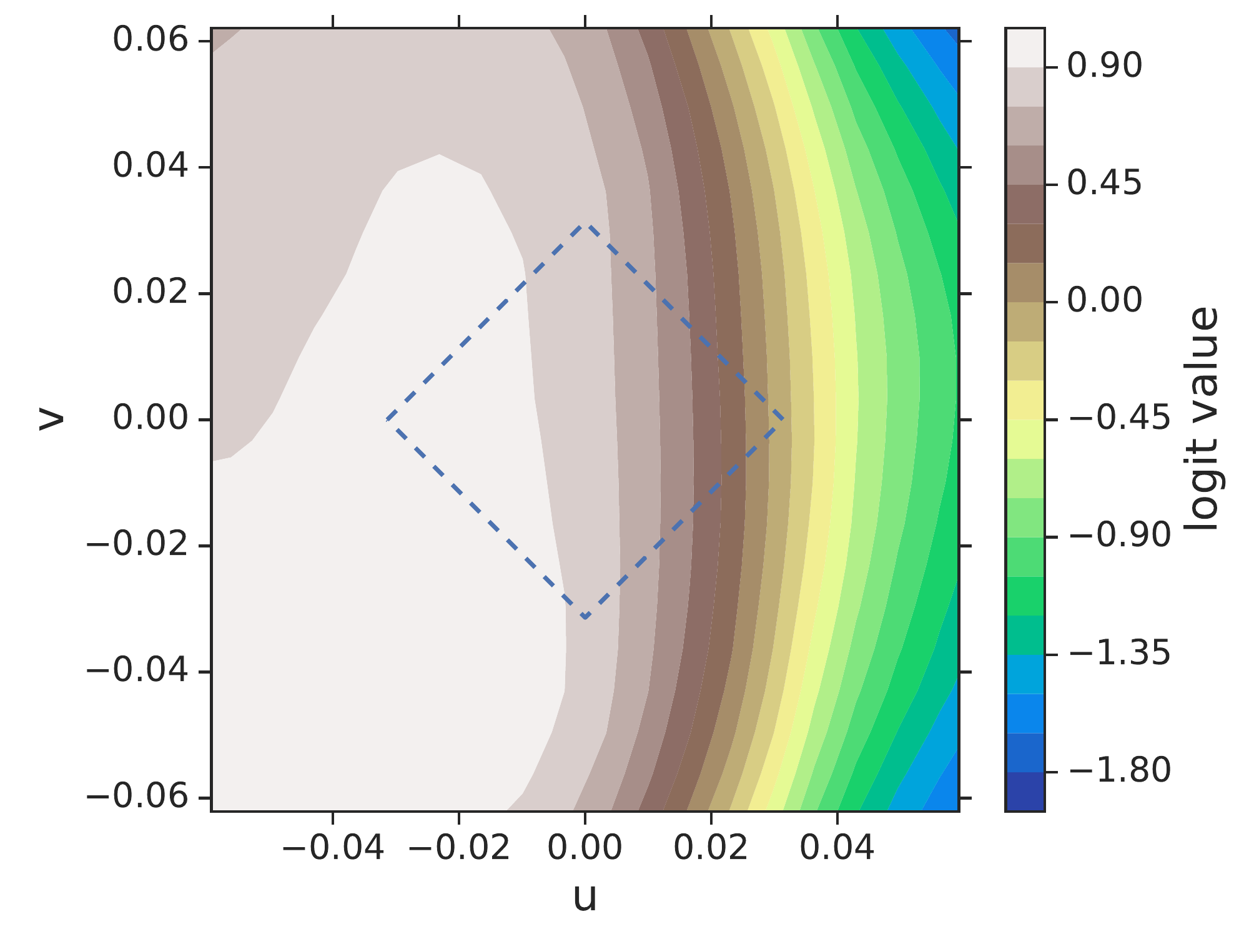}
\caption{$z_1$ on our model}
\end{subfigure} \\

\caption{
Logit landscapes around the nominal image of a ``truck'' (first image from the \cifar test set).
It is generated by varying the input to the model, starting from the original input image toward either the worst attack found using PGD ($u$ direction) or a random Rademacher direction ($v$ direction).
Panels show the contour plots of different logits: from left to right, we have logits $z_9$, $z_0$ and $z_1$; from top to bottom, we have the \citeauthor{madry_towards_2017}'s model, \citeauthor{zhang2019theoretically}'s model and our own adversarially trained model of similar size.
The diamond-shape represents the projected \linf ball of size $\epsilon = 8/255$ around the nominal image.
We observe that within this adversarial input set, all models tend to behave linearly.
}
\label{fig:landscapes_cifar}
\end{figure*}

\section{Reproducibility}
The code relevant to this publication is available as part of the Interval Bound Propagation library at \url{https://github.com/deepmind/interval-bound-propagation} under \path{src/attacks.py}.

\end{document}